\documentclass{article}

\makeatletter
\def\input@path{{styles/}}
\makeatother
\usepackage[preprint]{colm2026_conference}
\usepackage{fontspec}

\normalfont
\usepackage{microtype}
\usepackage{graphicx}
\usepackage{trimclip}
\usepackage{xcolor}
\usepackage{booktabs}
\usepackage{array}
\usepackage{fontawesome5}
\usepackage{hyperref}
\usepackage{colortbl}
\usepackage{float}
\usepackage{tikz}
\usepackage{tcolorbox}
\usepackage{pgfplots}
\pgfplotsset{compat=1.18}
\usepgfplotslibrary{groupplots}
\usepackage{hyperref}
\usepackage{url}
\usepackage{wrapfig}
\usepackage{makecell}
\usepackage{enumitem}
\usepackage{algorithm}
\usepackage{amsmath}
\usepackage{algpseudocode}
\usepackage{amssymb}
\DeclareMathOperator*{\argmax}{arg\,max}
\newtheorem{proposition}{Proposition}
\newenvironment{proof}{\noindent\textit{Proof.}\enspace}{\hfill$\square$\medskip}
\definecolor{abyss}{HTML}{121D36}
\definecolor{polarnight}{HTML}{1A2947}
\definecolor{nebula}{HTML}{2B3F66}
\definecolor{steeltrail}{HTML}{6D87BD}
\definecolor{skytrail}{HTML}{8FA8D8}
\definecolor{starlight}{HTML}{DFE7F5}
\definecolor{warmstar}{HTML}{E8D9C4}
\definecolor{allsparkwordmark}{HTML}{16233F}
\definecolor{allsparkspark}{HTML}{4A659C}
\definecolor{electricblue}{HTML}{3866FF}
\definecolor{covercream}{HTML}{EEF3FA}
\definecolor{coveraccent}{HTML}{3866FF}
\colorlet{pevekpurple}{skytrail}
\colorlet{bargray}{steeltrail}
\colorlet{barlgray}{starlight}

\newfontfamily\outfit[
  Path=assets/fonts/,
  UprightFont=Outfit-Regular.ttf,
  BoldFont=Outfit-SemiBold.ttf
]{Outfit}

\hypersetup{
  colorlinks=true,
  linkcolor=electricblue,
  citecolor=electricblue,
  urlcolor=coveraccent,
  filecolor=electricblue
}
\setcitestyle{numbers,square,comma,sort&compress}

\newcommand{\reporttitle}{%
  \textbf{D}\textsuperscript{3}\text{-MOPD:}\ \textbf{D}ynamic \textbf{D}omain Sche\textbf{D}uling for Efficient Multi-Teacher Distillation}
\title{\reporttitle}
\author{Anonymous Authors}

\begin{document}

\fancyhead{}
\renewcommand{\headrulewidth}{0pt}
\color{abyss}
\thispagestyle{empty}

\vspace*{-0.44in}
\begin{tcolorbox}[
  width=\linewidth,
  colback=covercream,
  colframe=covercream,
  boxrule=0pt,
  arc=14pt,
  outer arc=14pt,
  boxsep=0pt,
  left=20pt,
  right=20pt,
  top=13pt,
  bottom=11pt
]
  {\outfit\fontsize{20}{24}\selectfont\bfseries\centering
    \reporttitle\par}

  \vspace{1.45em}
  {\bfseries\centering AllSpark Team\par}

  \vspace{0.75em}
  \begingroup
  \normalfont
  \setlength{\parindent}{0pt}
  \setlength{\parskip}{0pt}
  Multi-teacher on-policy distillation (MOPD) distills several domain-expert teachers into a single student by minimizing per-domain reverse-KL divergence on the student's own rollouts. Existing approaches typically fix the per-domain data mixture before training, overlooking the fact that different domains converge at substantially different rates: some plateau early while others continue to improve throughout the training budget. A fixed mixture therefore wastes compute on fast-converging domains and undertrains slower-converging ones. To address this, we propose D$^3$-MOPD (\textbf{D}ynamic \textbf{D}omain Sche\textbf{D}uling for MOPD), a zero-overhead scheduler that repurposes the per-domain reverse-KL signal already produced during training to adapt the domain mixture online. Running asynchronously outside the training process, an off-process watcher periodically tracks each domain's KL trajectory, estimates remaining headroom and current improvement rate, and accordingly adjusts the domain sampling ratios without altering the core training loop. Our D$^3$-MOPD scales naturally to arbitrary numbers of domains, and the expected benefit grows as more domains introduce more diverse convergence patterns for the scheduler to exploit. On a Qwen3.6-35B-A3B student distilled from four domain-expert teachers, D$^3$-MOPD closes $97\%$ of the average student-to-teacher performance gap, compared with $63\%$ for vanilla MOPD, reaches the same peak performance with an approximately $3\times$ reduction in rollout steps, and surpasses the specialist teachers on three of seven benchmarks.

  \par
  \endgroup

  \vspace{0.65em}
  \noindent
  \begin{minipage}[b]{0.63\linewidth}
    \outfit\fontsize{8.4}{10.2}\selectfont
    \textbf{Date:} September 15, 2026\\[-0.1em]
    \textbf{Resources:}
    \href{https://github.com/AllSpark-Research/D-3-MOPD}{GitHub}\
  \end{minipage}%
  \hfill
  \begin{minipage}[b]{0.33\linewidth}
    \raggedleft
    \raisebox{-0.30em}{\includegraphics[height=16pt]{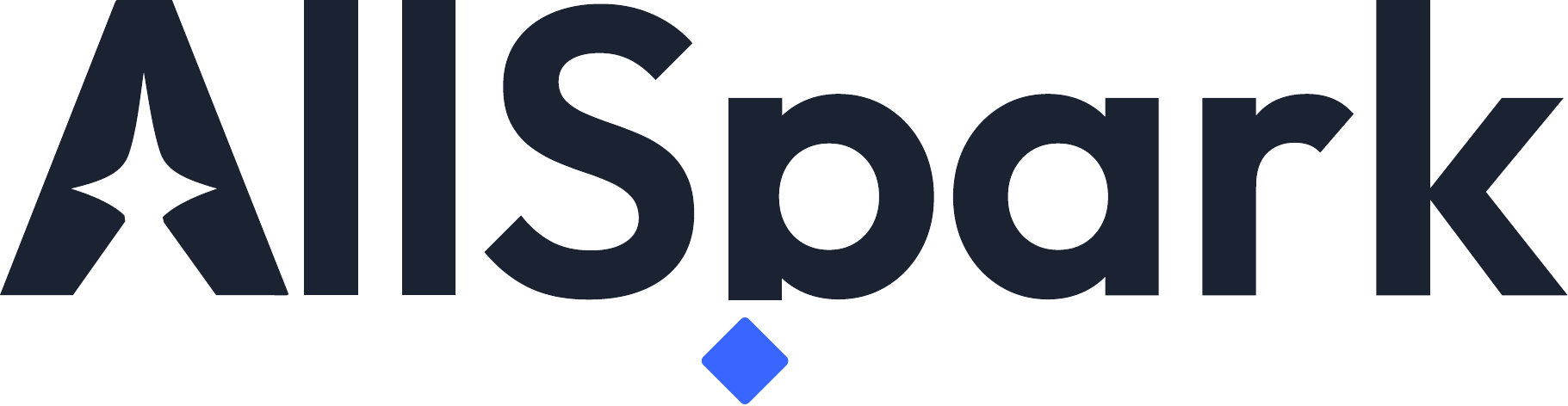}}%
  \end{minipage}
\end{tcolorbox}

\vspace{0.6em}

\vspace{-0.6em}
\section{Introduction}
\label{sec:intro}

\vspace{-0.65em}

\begin{wrapfigure}{r}{0.4\linewidth}
  \centering
  \vspace{-12pt}
  \includegraphics[width=\linewidth]{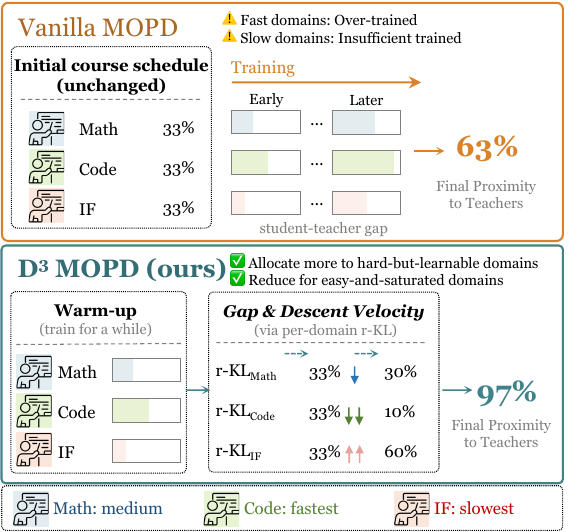}
  \vspace{-15pt}
  \caption{Comparison of vanilla MOPD and D$^3$-MOPD with $K=3$ domains.}
  \label{fig:intro}
  \vspace{-10pt}
\end{wrapfigure}

Multi-teacher on-policy distillation (MOPD) has recently emerged as an effective framework for combining the capabilities of several domain-expert teacher models into a single student~\citep{coreteam2026mimov2flashtechnicalreport,deepseekai2026deepseekv4highlyefficientmilliontoken}.
At each rollout step, the student generates on-policy trajectories from a $K$-domain prompt set~\citep{yang2026nemotron,zeng2026glm,m3team2026baichuanm3modelingclinicalinquiry}, and each domain teacher prefills on the corresponding trajectories to produce its per-token distribution. The student then minimizes the per-token reverse-KL divergence~\citep{agarwal2024policy,gu2024minillm} between its own distribution and the teacher's along the trajectory. The result is a unified model that inherits the strengths of all teachers, without requiring all of them to be served at inference.

Current MOPD implementations typically rely on a static domain mixture $\{p_k\}$, fixed before training to match the prompt pool sizes.
However, the student rarely learns each domain at the same rate~\citep{chen2026counteraction,ma2026mopd}: some domains plateau early as the student quickly closes the performance gap to the teacher, whereas others remain far from convergence throughout the entire training budget.
Consequently, a fixed mixture wastes significant compute on converged domains while limiting the training budget for actively learning domains (Figure~\ref{fig:intro}).
Since the per-domain reverse-KL is already computed by the loss objective at every rollout step, a natural and promising solution is to repurpose it as a readily available signal for identifying and correcting this training imbalance.

In this paper, we propose \emph{\textbf{D}ynamic \textbf{D}omain Sche\textbf{D}uling for MOPD} (D$^3$-MOPD), a lightweight scheduler that repurposes the per-domain reverse-KL signal already produced by the training loop to update the mixture $\{p_k\}$ online.
An asynchronous watcher periodically converts each domain's KL history into a \emph{composite signal} that combines its remaining gap to the teacher with its recent descent velocity, and maps this signal to a valid mixture via a temperature-controlled softmax with a per-domain floor. It then passes the updated mixture to the stratified data source, ensuring the core training loop remains strictly unmodified.
By confining modifications to the data loader and running the watcher detached, D$^3$-MOPD remains compatible with advances in loss functions~\citep{yang2026learningteachergeneralizedonpolicy,zhang-etal-2026-fast,jang-etal-2026-stable} and teacher training~\citep{song2026surveyonpolicydistillationlarge,fang2026flowopdonpolicydistillationflow}, without additional training overhead.

We validate D$^3$-MOPD on a Qwen3.6-35B-A3B~\citep{qwen36_35b_a3b} student distilled from four domain-expert teachers spanning Math, Code, Instruction Following, and Tool-use. Experimental results highlight that D$^3$-MOPD outperforms vanilla MOPD, closing $97\%$ of the average student-to-teacher performance gap (compared to $63\%$ for the baseline) while achieving a higher peak accuracy across all tasks, surpassing the specialist teachers on three of seven benchmarks. It also improves training efficiency by reaching the baseline's optimal performance in just $47$ rollout steps, approximately $3\times$ faster than vanilla MOPD ($143$ steps). Furthermore, we argue that the expected benefits of D$^3$-MOPD will naturally scale with $K$, as more domains introduce more diverse convergence patterns for the scheduler to exploit. Overall, this paper first identifies a systematic mismatch between MOPD's static mixture and the student's asynchronous per-domain convergence, and resolves it with D$^3$-MOPD, a zero-overhead scheduler that turns the reverse-KL signal already computed by the training loop into a dynamic data mixture to improve both peak quality and training efficiency.

\vspace{-0.5em}
\section{Motivation: Limitations of Fixed Domain Mixtures}
\label{sec:motivation}

In this section, we begin by formalizing vanilla MOPD and its fixed mixture (\S\ref{ssec:formulation}), then analyze per-domain learning dynamics in isolation (\S\ref{ssec:dynamics}), and finally demonstrate how their disparate convergence rates waste compute under a static mixture (\S\ref{ssec:inefficiency}).

\subsection{Vanilla MOPD and Fixed Mixtures}
\label{ssec:formulation}
On-policy distillation (OPD) trains a student policy $\pi_\theta$ on its own generations, supervising them with a teacher $\pi_T$ under a reverse Kullback-Leibler (KL) objective. Multi-teacher OPD (MOPD) extends this paradigm to $K$ domains, where each domain is associated with a prompt distribution $\mathcal{D}_k$ and an expert teacher $\pi_{T_k}$. MOPD minimizes a weighted sum of per-domain losses:
\begin{equation}
    \mathcal{J}_{\text{MOPD}}(\theta) \;=\; \sum_{k=1}^{K} p_k \, \mathbb{E}_{x \sim \mathcal{D}_k,\; y \sim \pi_\theta(\cdot \mid x)} \Big[ D_{\mathrm{KL}}\big(\pi_\theta(\cdot \mid x, y) \,\|\, \pi_{T_k}(\cdot \mid x, y)\big) \Big],
\end{equation}
where the KL divergence is summed over the generated tokens of $y$, and the mixture weight $p_k$ satisfies $p_k \ge 0$ with $\sum_k p_k = 1$. In vanilla MOPD, the per-domain datasets $\mathcal{D}_k$ are pooled and shuffled, resulting in an expected per-batch share for domain $k$ of $p_k = N_k / \sum_i N_i$, where $N_k = |\mathcal{D}_k|$. Crucially, this expected share remains strictly defined prior to training and is held constant throughout the run. Pool-and-shuffle sampling fixes only this expectation, and each batch's realized share fluctuates around $p_k$.

\subsection{Domain-Specific Learning Dynamics}
\label{ssec:dynamics}
To understand the inefficiency of a static $p_k$, we analyze each domain. Specifically, we train the student separately on Math, Code, and Instruction Following (IF) using single-domain OPD, with identical initializations and the corresponding expert teacher for each domain. Figure~\ref{fig:sopd_curves} reports the per-step reverse-KL on a held-out evaluation set. Math drops sharply during the first quarter of the training budget before plateauing near a floor. Code declines steadily at a slower rate, showing no signs of convergence within the budget. Meanwhile, IF continues to reduce its loss throughout, yet its absolute reverse-KL remains one to two orders of magnitude above the other domains at every step. In conclusion, these domains reach their plateau phases at different stages, with their absolute KL signals varying by orders of magnitude.

\begin{figure}[t]
  \centering
  \includegraphics[width=\linewidth]{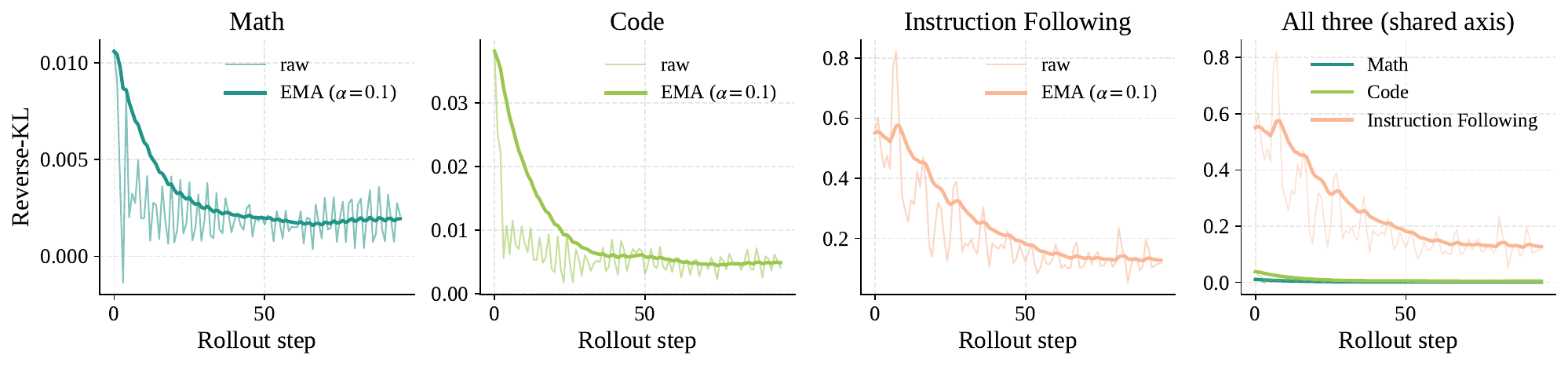}
  \vspace{-20pt}
  \caption{Per-domain OPD reverse-KL on Math, Code, and IF. The three domains converge at widely different rates, with the absolute KL of IF remaining one to two orders of magnitude above the others throughout the training process.}
  \vspace{-10pt}
  \label{fig:sopd_curves}
\end{figure}

\subsection{Computational Inefficiency of Fixed Mixtures}
\label{ssec:inefficiency}
When these same three domains are trained jointly under vanilla MOPD with a uniform mixture ($p_{\text{math}} = p_{\text{code}} = p_{\text{if}} = 1/3$), the varying convergence rates documented in \S\ref{ssec:dynamics} translate into a substantial waste of training compute. As illustrated in Figure~\ref{fig:mopd_fixed}, the three domains enter their low-KL phases at markedly different points: Code by roughly step 48, Math by step 96, and IF only around step 144. Yet the fixed mixture allocates one-third of every batch to each domain for the remainder of training. Consequently, valuable training compute is wasted on domains long past the point where additional samples yield further KL reductions. This waste stems not from the data or the teachers themselves, but from holding $p_k$ fixed.

\begin{figure}[t]
  \centering
  \includegraphics[width=\linewidth]{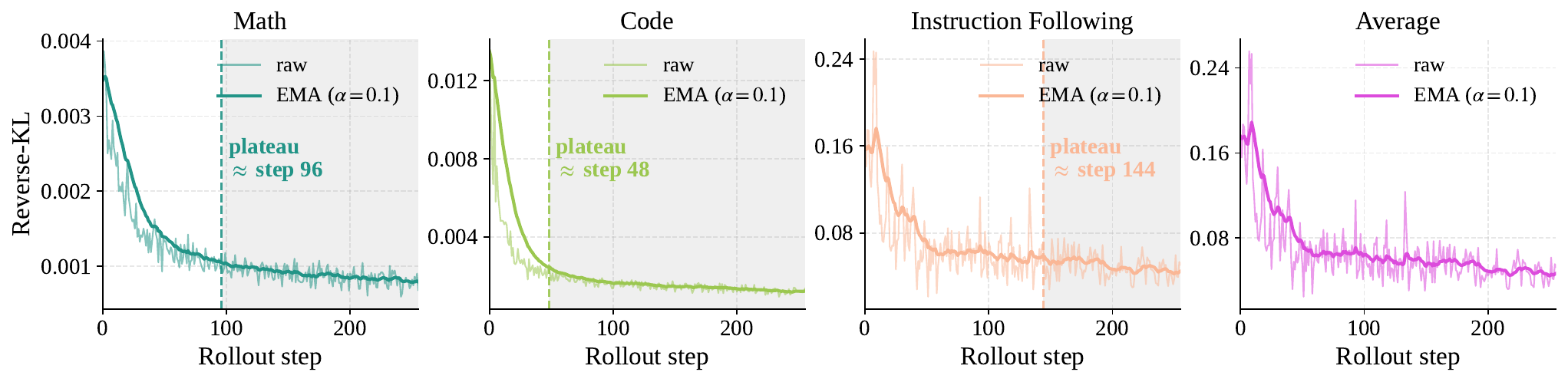}
  \vspace{-20pt}
  \caption{Vanilla MOPD with a fixed uniform mixture ($p_k = 1/3$). The three domains reach their low-KL phases at markedly different points, yet the static mixture allocates one-third of every batch to each throughout the remaining training (shaded regions).}
  \label{fig:mopd_fixed}
  \vspace{-10pt}
\end{figure}

\paragraph{Takeaway.}
Under vanilla MOPD, distinct domains enter low-KL plateau phases at entirely different training stages (\S\ref{ssec:dynamics}). A fixed mixture inevitably wastes compute on domains that have ceased to improve (\S\ref{ssec:inefficiency}). This highlights the necessity for a dynamic sampling ratio $p_k$ that smoothly adapts to native training signals, motivating the design of our method in \S\ref{sec:method}.

\vspace{-0.5em}
\section{Method}
\label{sec:method}

Based on these observations, we propose \textbf{D}ynamic \textbf{D}omain Sche\textbf{D}uling for MOPD (D$^3$-MOPD), a lightweight scheduler that adapts the per-domain sampling ratio $\{p_k\}$ using the reverse-KL signal. \S\ref{ssec:architecture} describes the framework, which consists of an off-process watcher and a stratified data source. \S\ref{ssec:algorithm} then details how the per-domain KL history is converted into a dynamic mixture.

\vspace{-1em}
\subsection{Framework}
\label{ssec:architecture}

D$^3$-MOPD introduces two components to a vanilla MOPD training loop, as illustrated in Figure~\ref{fig:d3_arch}: an \emph{off-process watcher} that computes the per-domain mixture $p_k$ from the reverse-KL signals already produced by the trainer, and a \emph{stratified data source} that reads $p_k$ to assemble each batch with the target mixture. This pipeline generalizes to any number of domains $K$.

\begin{figure}[t]
  \centering
  \includegraphics[width=0.97\linewidth]{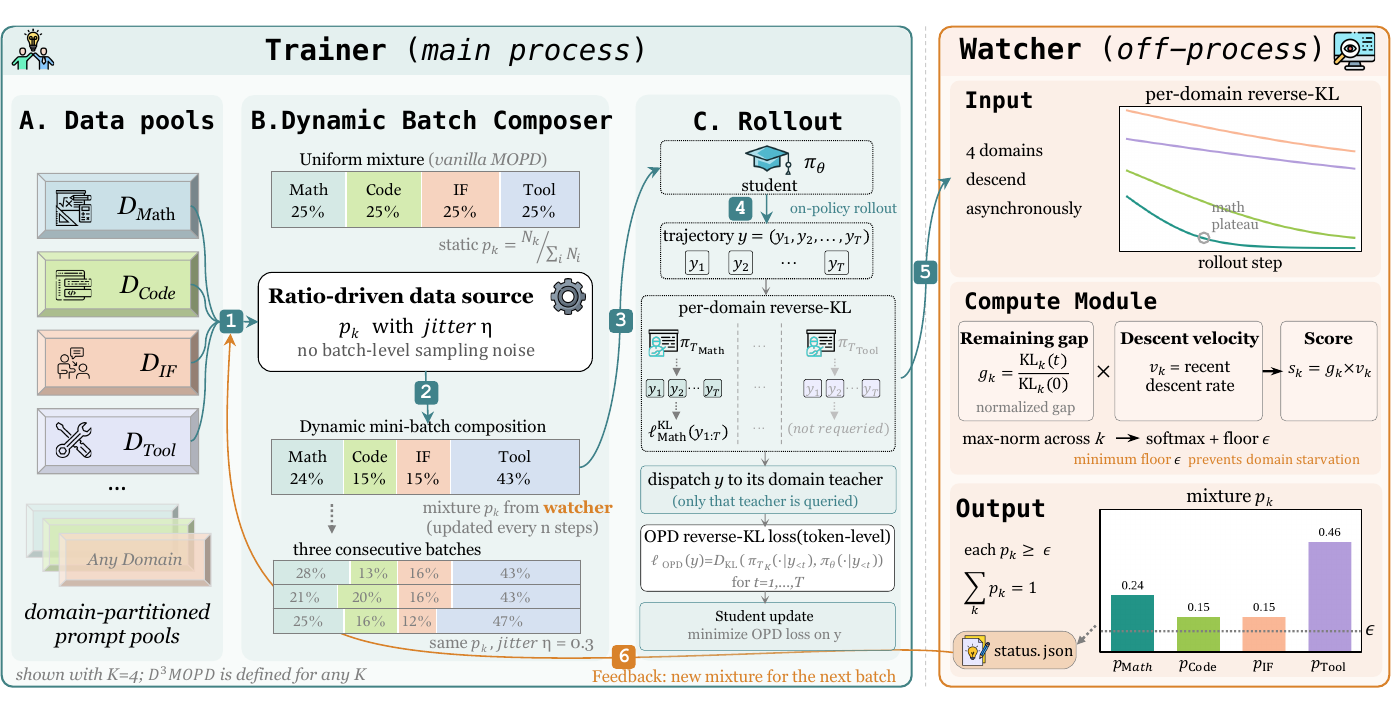}
  \vspace{-10pt}
  \caption{The framework of D$^3$-MOPD. At each rollout step, the student generates responses that the teachers prefill on to produce per-domain reverse-KL signals. A watcher periodically maps these signals into a continuous per-domain mixture $p_k$ that shapes subsequent batches.}
  \label{fig:d3_arch}
\end{figure}

The trainer follows vanilla MOPD: at each rollout step, the student generates responses from prompts drawn by the data source. Each response is dispatched to its corresponding domain teacher, which prefills on the trajectory to produce its per-token distribution. The resulting per-domain reverse-KL is appended to a shared log. Since vanilla MOPD typically logs only the domain-averaged reverse-KL, the watcher groups per-sample KL values by domain to recover per-domain signals. It runs as a separate process that periodically computes a new mixture $p_k$ following \S\ref{ssec:algorithm} and writes it to a status file that the data source reads between updates so that the trainer never blocks.

We replace the conventional pool-and-shuffle loader with the \emph{stratified data source}. Given a batch size $B$ and per-batch mixture $\{\tilde{p}_k\}$ (obtained by applying the batch-level jitter below to the watcher-supplied $\{p_k\}$), it first assigns $\lfloor B \cdot \tilde{p}_k \rfloor$ samples to each domain, then distributes the remaining $B - \sum_k \lfloor B \cdot \tilde{p}_k \rfloor$ samples to the domains with the largest fractional parts. This strictly aligns each domain's per-batch share with the target mixture, replacing the uncontrolled variance of pooled shuffling with the purposeful batch-level jitter described below.

\vspace{-1em}
\paragraph{Batch-level jitter.}
Because the watcher updates the mixture only every $n$ rollout steps, consecutive batches would otherwise share identical domain proportions. The data source therefore perturbs each batch's proportions via $\tilde{p}_k = p_k(1 + u_k) / \sum_j p_j(1 + u_j)$ with $u_k \sim \mathrm{Uniform}(-\eta, \eta)$, restoring batch-to-batch variation while keeping the long-run mean close to $\{p_k\}$ ($\mathbb{E}[u_k] = 0$).

\vspace{-1em}
\paragraph{Integration with existing MOPD.}
All modifications are strictly confined to the data path, leaving the rollout, teacher prefill, and student update operations untouched. An existing MOPD implementation can adopt D$^3$-MOPD by replacing the standard loader with the stratified data source and launching a watcher process alongside the trainer.

\subsection{Composite Ratio from Remaining Gap and Descent Velocity}
\label{ssec:algorithm}

The watcher translates the per-domain reverse-KL history into a mixture $\{p_k\}$ at each update. Our goal is to allocate more samples to domains exhibiting ongoing improvement and fewer to those that have converged. We quantify this improvement along two axes: the \emph{remaining gap} $\widetilde{\mathrm{KL}}_k$ (the proportion of the initial KL left to close) and the \emph{descent velocity} $v_k$ (the recent rate of KL reduction). Relying on either axis alone is insufficient. A domain might exhibit a large remaining gap but plateau, rendering further samples ineffective. Conversely, a nearly converged domain might descend rapidly within the current window, yet its minimal room for improvement makes additional samples redundant. Their product ensures that a domain receives a large share only when it has room to improve and is actively doing so.

\paragraph{Remaining gap.}
We define the remaining gap by normalizing the current smoothed KL by its initial value,
\begin{small}
\begin{equation}
    \widetilde{\mathrm{KL}}_k(t) \;=\; \frac{\overline{\mathrm{KL}}_k(t)}{\mathrm{KL}_k^{(0)}},
    \qquad
    \mathrm{KL}_k^{(0)} \;=\; \frac{1}{S_0}\sum_{t=1}^{S_0} \mathrm{KL}_k(t),
    \label{eq:normalized_kl}
\end{equation}
\end{small}
where $\overline{\mathrm{KL}}_k(t)$ is the exponential moving average (EMA) of domain-$k$ reverse-KL up to rollout step $t$, and $\mathrm{KL}_k^{(0)}$ is the mean of the first $S_0$ observations seeded once at the start of training. Under normal training, $\widetilde{\mathrm{KL}}_k(t) \in (0, 1]$ and approaches zero as the domain converges. As documented in \S\ref{ssec:dynamics}, this normalization enables fair comparison across domains whose absolute KL values may differ by orders of magnitude.

\paragraph{Descent velocity.}
To determine whether a domain's KL continues to decrease, we measure its recent rate of change. Let $W$ be a fixed window length in rollout steps and $R$ the number of non-overlapping windows used for averaging. We define the single-window relative change as
\begin{small}
\begin{equation}
    \delta_k^{(i)}(t) \;=\; \frac{\overline{\mathrm{KL}}_k(t - iW) - \overline{\mathrm{KL}}_k(t - (i+1)W)}{\overline{\mathrm{KL}}_k(t - (i+1)W)}, \qquad i = 0, \dots, R-1,
    \label{eq:delta_window}
\end{equation}
\end{small}
where $\delta_k^{(0)}$ represents the most recent window. The descent velocity takes the negative average of these $R$ changes and clips the result at zero,
\begin{small}
\begin{equation}
    v_k(t) \;=\; \max\!\Big(0,\; -\frac{1}{R}\sum_{i=0}^{R-1} \delta_k^{(i)}(t)\Big),
    \label{eq:descent_velocity}
\end{equation}
\end{small}
meaning $v_k(t) > 0$ strictly requires the domain's KL to decrease on average over the last $R$ windows.

\paragraph{Composite signal.}
We compute the composite signal via the product of these two metrics,
\begin{small}
\begin{equation}
    s_k(t) \;=\; \widetilde{\mathrm{KL}}_k(t)\,\cdot\,v_k(t),
    \qquad
    \tilde{s}_k(t) \;=\; s_k(t) \;\big/\; \max_j s_j(t),
    \label{eq:composite}
\end{equation}
\end{small}
ensuring $s_k$ remains large only when a domain is both far from convergence and steadily descending. The max-normalization brings $\tilde{s}_k$ into $[0, 1]$, making the softmax temperature $T$ independent of the raw signal magnitude. In the degenerate case where all domains plateau and $\max_j s_j(t) = 0$, we set $\tilde{s}_k(t) = 0$ for all $k$, and the mapping defaults to a uniform mixture. Appendix~\ref{app:theory} provides a theoretical analysis showing that under an exponential decay model, $s_k$ approximates the normalized marginal KL reduction, justifying the gap-velocity product as a greedy allocation rule.

\paragraph{Mapping to a valid mixture.}
We map $\tilde{s}_k$ to a probability mixture through a softmax function parameterized by temperature $T$ and a per-domain floor $\epsilon$,
\begin{small}
\begin{equation}
    p_k(t) \;=\; \epsilon \;+\; \bigl(1 - K\epsilon\bigr) \cdot \frac{\exp\bigl(\tilde{s}_k(t) / T\bigr)}{\sum_{j=1}^{K}\exp\bigl(\tilde{s}_j(t) / T\bigr)},
    \label{eq:ratio}
\end{equation}
\end{small}
guaranteeing that $p_k(t) \ge \epsilon$ for every domain and $\sum_k p_k(t) = 1$. The watcher recomputes this mixture at every update and delivers it to the data source via the status file described in \S\ref{ssec:architecture}.

\paragraph{Why average over $R$ windows.}
With $R = 1$, the velocity estimate is highly sensitive to random noise in the KL signal. Because Eq.~\ref{eq:descent_velocity} clips the sum at zero, a short-term KL spike in an actively learning domain would wrongly push $v_k$ to zero, cutting off its sample allocation. Averaging the changes over $R$ non-overlapping windows prevents these brief spikes from accidentally triggering the cutoff. As a result, $v_k$ becomes zero only when the domain remains flat or rises over a longer period, making it a reliable sign of actual convergence.

\vspace{-0.5em}
\paragraph{Warmup.}
The composite signal becomes available once $2W$ observations of $\overline{\mathrm{KL}}_k$ are recorded. At this point, $\widetilde{\mathrm{KL}}_k$ relies on the seeded $\mathrm{KL}_k^{(0)}$, and computing a single $\delta_k^{(0)}$ requires the two window endpoints at $t{-}W$ and $t$. Eq.~\ref{eq:descent_velocity} represents the standard formulation with a full history of $R$ windows. During the warmup phase $2W \le t < (R{+}1)W$, the algorithm replaces $R$ with the available window count $R'(t) = \lfloor t/W \rfloor - 1$. From $t = (R{+}1)W$ onward, $R'(t)$ is capped at $R$, and the standard form of Eq.~\ref{eq:descent_velocity} takes over. Prior to $t = 2W$, the watcher provides no mixture updates, causing the data source to default to a uniform mixture $p_k = 1/K$. The batch-level jitter (\S\ref{ssec:architecture}) remains active throughout this initial period.

\vspace{-0.5em}
\paragraph{Design rationale.}
The temperature $T$ modulates how sharply the sampling share is redirected: as $T \to \infty$, the mixture becomes uniform, whereas as $T \to 0$, the mixture concentrates on the argmax of $\tilde{s}$. Finally, the floor $\epsilon$ prevents any domain from being completely discarded, ensuring already-converged domains retain a minimal presence to mitigate catastrophic forgetting. Algorithm~\ref{alg:d3mopd} summarizes the scheduling process.

\begin{algorithm}[t]
\caption{\textbf{D}ynamic \textbf{D}omain Sche\textbf{D}uling for MOPD (D$^3$-MOPD).}
\label{alg:d3mopd}
\begin{algorithmic}[1]
\small
\Require Domain pools $\{\mathcal{D}_k\}_{k=1}^K$; hyperparameters $T, \epsilon, S_0, R, W, n, \eta, B$; shared KL log and status file.
\State \textbf{Initialize} at step $t = S_0$: $\mathrm{KL}_k^{(0)} \gets \tfrac{1}{S_0}\sum_{\tau=1}^{S_0} \mathrm{KL}_k(\tau)$ for all $k$ \Comment{Eq.~\ref{eq:normalized_kl}; seeded once}
\Function{WatcherTick}{$t$} \Comment{invoked at rollout step $t$, every $n$ steps}
    \If{$t < 2W$} \Return \Comment{initial warmup; data source uses uniform fallback} \EndIf
    \State $R' \gets \min\bigl(R,\, \lfloor t/W \rfloor - 1\bigr)$ \Comment{$R' = R$ once $t \ge (R{+}1)W$}
    \For{$k = 1, \dots, K$}
        \State $\widetilde{\mathrm{KL}}_k \gets \overline{\mathrm{KL}}_k(t) / \mathrm{KL}_k^{(0)}$ \Comment{Eq.~\ref{eq:normalized_kl}}
        \State $\delta_k^{(i)} \gets \bigl[\overline{\mathrm{KL}}_k(t{-}iW) - \overline{\mathrm{KL}}_k(t{-}(i{+}1)W)\bigr] / \overline{\mathrm{KL}}_k(t{-}(i{+}1)W)$ for $i = 0, \dots, R'{-}1$
        \State $v_k \gets \max\!\bigl(0,\, -\tfrac{1}{R'}\sum_i \delta_k^{(i)}\bigr)$ \Comment{Eq.~\ref{eq:descent_velocity}}
        \State $s_k \gets \widetilde{\mathrm{KL}}_k \cdot v_k$
    \EndFor
    \If{$\max_j s_j = 0$} \Comment{degenerate case: all domains plateaued}
        \State $\tilde{s}_k \gets 0$ for all $k$
    \Else
        \State $\tilde{s}_k \gets s_k / \max_j s_j$ for all $k$ \Comment{Eq.~\ref{eq:composite}}
    \EndIf
    \State $p_k \gets \epsilon + (1{-}K\epsilon)\, \dfrac{\exp(\tilde{s}_k/T)}{\sum_j \exp(\tilde{s}_j/T)}$ for all $k$ \Comment{Eq.~\ref{eq:ratio}}
    \State Atomically write $\{p_k\}$ to the status file
\EndFunction
\Function{GetBatch}{$B$} \Comment{invoked at every rollout step}
    \State $\{p_k\} \gets$ status file (fallback $p_k \gets 1/K$)
    \State $u_k \sim \mathrm{Uniform}(-\eta, \eta);\;\; \tilde{p}_k \gets p_k(1{+}u_k) \big/ \sum_j p_j(1{+}u_j)$ \Comment{jitter + renormalize}
    \State Assign $n_k \gets \lfloor B \tilde{p}_k \rfloor$; distribute the remaining $B - \sum_k n_k$ samples to the domains with the largest fractional parts so $\sum_k n_k = B$
    \State \Return $n_k$ prompts sampled from $\mathcal{D}_k$ for each $k$
\EndFunction
\end{algorithmic}
\end{algorithm}

\vspace{-0.5em}
\section{Experiments}
\label{sec:experiments}

\vspace{-0.5em}
\subsection{Training Setup}
\label{ssec:training_details}

We validate D$^3$-MOPD using Qwen3.6-35B-A3B~\citep{qwen36_35b_a3b} as the student model and $K{=}4$ domain-expert teachers that share the same backbone but are individually fine-tuned via GRPO~\citep{shao2024deepseekmath} on a single domain: Math, Code, Instruction Following, or Tool-use, extending the three-domain analysis of \S\ref{sec:motivation} with a fourth domain. We optimize the student using the vanilla MOPD~\citep{ma2026mopd} per-token reverse-KL (r-KL) objective on the slime~\citep{slime_github} framework, which employs an asynchronous rollout-update loop that overlaps teacher prefill with student generation. The composite signal watcher operates as an off-process job alongside the trainer following \S\ref{ssec:architecture}. It recomputes the mixture $\{p_k\}$ every $n=10$ rollout steps and delivers it to the stratified data source via a status file. All main runs use a rollout batch size of $128$ over $256$ rollout steps. Other training details are provided in Appendix~\ref{app:training_hparams}.

\paragraph{Training data.}
We construct a prompt set of ${\sim}4$k prompts per domain, formatted as single-turn message lists in JSONL with per-sample domain tags. The Math and Code splits come from DAPO-Math-17K~\citep{yu2026dapo} and CodeI/O~\citep{li2025codeio}, respectively. The Instruction Following split is filtered from Nemotron-Cascade~2~\citep{yang2026nemotron}, and the Tool-use split uses the Fission-GRPO~\citep{zhang2026robust} training set.

\subsection{Evaluation Setup}
\label{ssec:eval_setup}

We evaluate on seven benchmarks covering the four training domains: AIME 2025~\citep{maa2025aime_i} (avg@64) and HMMT November 2025~\citep{dekoninck2026beyond} (avg@32) for mathematics; LiveCodeBench Code Generation~\citep{jain2025livecodebench} (avg@6) and OJBench C++~\citep{wang2025ojbench} (default protocol) for code; IFBench~\citep{pyatkin2026generalizing} and IFEval~\citep{zhou2023instruction} for instruction following; and BFCL v3 Multi-Turn (Base)~\citep{patil2025berkeley} for tool-use. We generate responses in non-thinking mode with temperature $0.7$, top-$p$ $0.8$, top-$k$ $20$, and presence penalty $1.5$, following the official Qwen3.6-35B-A3B recommendations. 

We evaluate every $16$ rollout steps starting from step~$15$. D$^3$-MOPD and vanilla MOPD are trained under identical settings except for the mixture policy: fixed $\{p_k\}$ vs.\ online-updated $\{p_k\}$. Since raw student-to-teacher performance gaps differ across domains, averaging standard accuracies over-weights domains with larger margins. We therefore report a per-benchmark normalized score $\hat{s}_b = (s_b - s_b^{\mathrm{stu}}) / (s_b^{\mathrm{tea}} - s_b^{\mathrm{stu}})$, which scales the initial student to $0$ and the domain-expert teacher to $1$, and compute its uniform average across benchmarks. Values above $1.0$ indicate improvement beyond the respective specialist teacher.

\subsection{Main Results}
\label{ssec:main_results}

\begin{figure}[t]
  \centering
  \includegraphics[width=\linewidth]{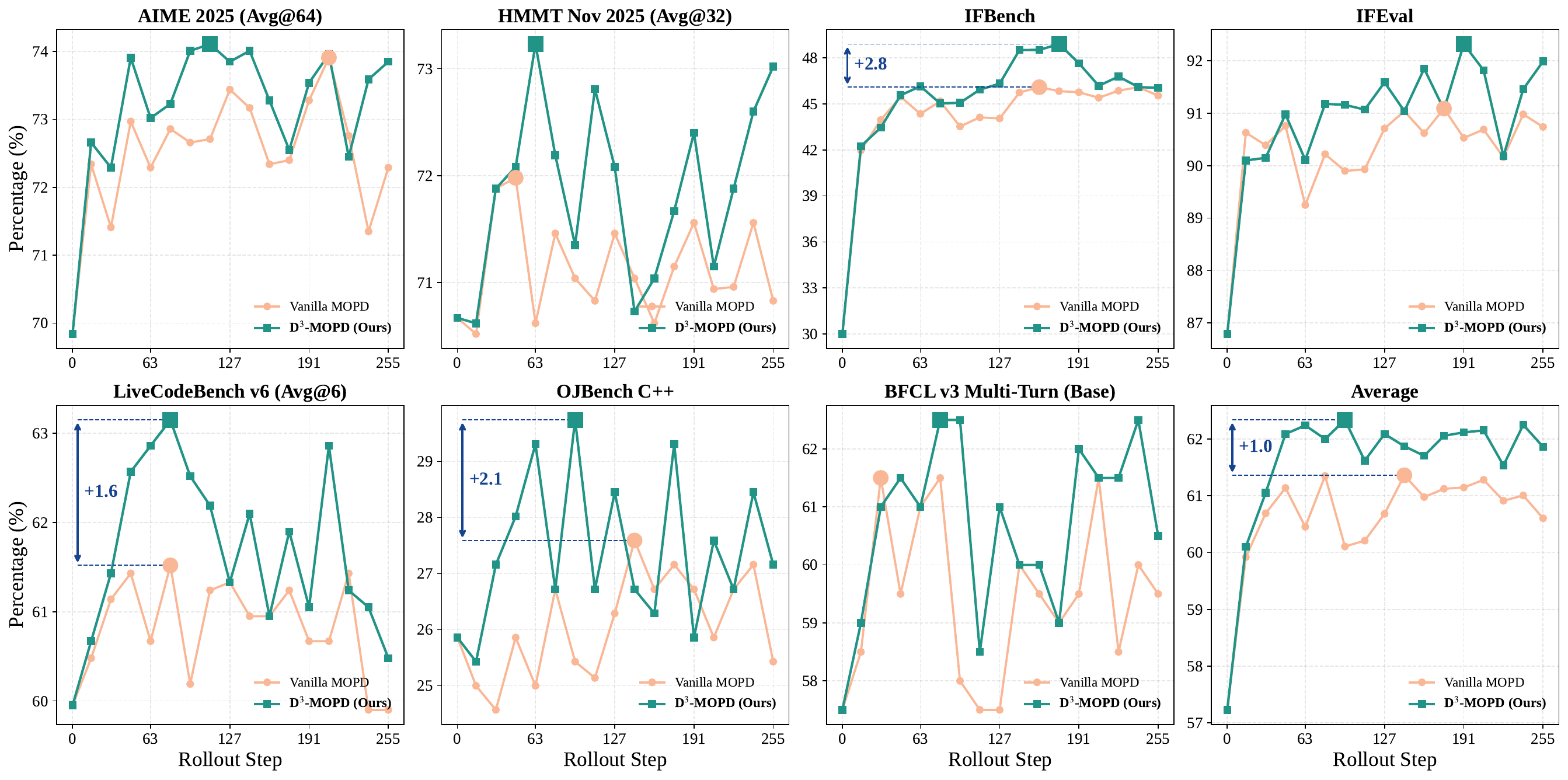}
  \vspace{-20pt}
  \caption{Per-benchmark accuracy of vanilla MOPD and D$^3$-MOPD across the 16 evaluated rollout checkpoints, spanning seven benchmarks plus their unweighted average. D$^3$-MOPD matches or outperforms the vanilla baseline on every benchmark at almost every checkpoint.}
  \label{fig:main_comparison}
  \vspace{-10pt}
\end{figure}

Figure~\ref{fig:main_comparison} shows the per-benchmark accuracy of D$^3$-MOPD and vanilla MOPD throughout training. Both methods improve rapidly during the first $\sim$50 steps before plateauing, with D$^3$-MOPD consistently maintaining a higher overall trajectory. The composite signal first activates at step~$20$ with a single-window velocity estimate and uses the full $R=3$ velocity estimate from step~$40$ onward. Consequently, the initial evaluation at step~$15$ shows minimal performance differences. Overall, D$^3$-MOPD outperforms vanilla MOPD in both peak performance and training efficiency. We highlight three findings:

\begin{itemize}[leftmargin=*,itemsep=2pt,topsep=4pt]
\item \textbf{D$^3$-MOPD peaks higher on all benchmarks.} The vanilla baseline reaches its optimal average of $61.4$ at step~$143$. In contrast, D$^3$-MOPD attains a higher peak of $62.3$ as early as step~$95$, demonstrating individual benchmark gains ranging from $+0.2$ on AIME~2025 ($74.1$ vs $73.9$) to $+2.8$ on IFBench ($48.9$ vs $46.1$). The early peak in average accuracy occurs because code benchmarks (LiveCodeBench and OJBench) converge and begin degrading after $\sim$80 steps, consistent with the domain over-optimization observed in \S\ref{sec:motivation} (Figure~\ref{fig:mopd_fixed}). By reducing the share of these converging domains, the scheduler redirects the rollout budget toward domains with active growth, raising the overall peak accuracy.

\item \textbf{D$^3$-MOPD reaches the vanilla peak with $3\times$ fewer steps.} While vanilla MOPD requires $143$ steps to reach its average score of $61.4$, D$^3$-MOPD surpasses this threshold by step~$47$ (scoring $62.1$). Furthermore, vanilla MOPD fails to establish its per-benchmark peaks until as late as step~$207$ (AIME~2025). Notably, D$^3$-MOPD matches or exceeds every one of these individual peaks within the first $79$ steps. Because the scheduler continuously shifts prompt sampling from converging domains to those still improving, each rollout step yields a larger marginal gain, accelerating overall convergence.

\item \textbf{Domain convergence order aligns with r-KL.} Code benchmarks peak earliest (LiveCodeBench by step~$79$, OJBench by step~$95$), whereas instruction-following benchmarks peak latest (IFBench at step~$175$, IFEval at step~$191$), while mathematics falls in between. This progression aligns with the r-KL convergence pattern in Figure~\ref{fig:mopd_fixed}, where code r-KL plateaus first and IF r-KL last. This consistent alignment validates r-KL as an effective monitoring signal: a domain with a plateaued r-KL has likely approached its peak accuracy, and further budget allocation risks computational waste and model degradation.
\end{itemize}

\subsection{Ablation Studies}
\label{ssec:ablation}

\begin{table}[t]
  \centering
  \small
  \setlength{\tabcolsep}{0.9pt}
  \caption{Ablation results. Best-S denotes the rollout step whose checkpoint achieves the highest average across the evaluated benchmarks over the full 256-step run. All variants differ from D$^3$-MOPD in only the ablated variable; other settings are identical.}
  \vspace{4pt}
  \label{tab:ablation}
  \begin{tabular}{lcccccccc|c}
    \toprule
    & & \multicolumn{2}{c}{Math} & \multicolumn{2}{c}{IF} & \multicolumn{2}{c}{Code} & Tool & \\
    \cmidrule(lr){3-4} \cmidrule(lr){5-6} \cmidrule(lr){7-8} \cmidrule(lr){9-9}
    Method & Best-S & AIME25 & HMMT(N) & IFBench & IFEval & LCB v6 & OJB C++ & BFCL(B) & Avg \\
    \midrule
    Vanilla MOPD          & 143 & \underline{73.17} & 71.04 & 45.74 & 91.04 & 60.95 & 27.59 & 60.00 & 61.36 \\
    \midrule
    w/o velocity            & 127 & 72.66 & 71.15 & 45.17 & 90.91 & 59.52 & \underline{28.45} & 62.00 & 61.41 {\scriptsize \color[HTML]{479A5F}($+$0.05)} \\
    w/o gap                 & 255 & 71.72 & \textbf{71.46} & \textbf{49.01} & 90.83 & 60.10 & 24.57 & 62.50 & 61.46 {\scriptsize \color[HTML]{479A5F}($+$0.10)} \\
    w/o jitter ($\eta{=}0$) &  95 & 72.66 & 71.15 & 46.58 & 89.90 & 60.95 & 27.16 & \underline{63.00} & 61.63 {\scriptsize \color[HTML]{479A5F}($+$0.27)} \\
    w/o smoothing ($R{=}1$) &  95 & 73.07 & \textbf{71.46} & \underline{46.94} & \textbf{92.14} & \underline{61.81} & 26.29 & \textbf{63.50} & \underline{62.17} {\scriptsize \color[HTML]{479A5F}($+$0.81)} \\
    \midrule
    \makecell[l]{\textbf{D$^3$-MOPD} \\ ($\eta{=}0.30$, $R{=}3$)} &  95 & \textbf{74.01} & \underline{71.35} & 45.07 & \underline{91.16} & \textbf{62.52} & \textbf{29.74} & 62.50 & \textbf{62.34} {\scriptsize \color[HTML]{479A5F}($+$0.98)} \\
    \bottomrule
  \end{tabular}
  \vspace{-10pt}
\end{table}

Table~\ref{tab:ablation} evaluates the individual contributions of three design choices in D$^3$-MOPD. The per-domain ratio trajectories of each variant are provided in Appendix~\ref{app:ablation_ratio}.

\paragraph{Composite signal vs.\ single-signal variants.}
Although both single-signal variants outperform vanilla MOPD (w/o velocity $61.41$, w/o gap $61.46$) and validate r-KL-driven dynamic scheduling, neither matches the composite formulation ($62.34$). The two signals differ in convergence speed. The w/o velocity variant peaks early at step~$127$ because it uses the current normalized r-KL levels to identify plateauing domains. In contrast, the w/o gap variant peaks at step~$255$. Since all domains' KL drops rapidly during early training, the velocity signal cannot differentiate them until descent rates diverge. Notably, the w/o gap variant records the highest IFBench score ($49.01$), consistent with IF converging last (\S\ref{ssec:main_results}). A velocity-driven scheduler allocates budget to IF since its descent rate remains high, enabling prolonged capability growth.

\paragraph{Effectiveness of batch jitter.}
Removing jitter ($\eta{=}0$) does not alter the peak step (both at $95$) but reduces the peak average by $0.71$ ($61.63$ vs $62.34$), with the most severe drops on code benchmarks (OJBench $-2.6$, LiveCodeBench $-1.6$). We hypothesize that the proportion variation from jitter acts as a form of exploration: for domains nearing their r-KL plateau, a fixed sampling ratio yields only weak gradients, and the randomized bursts from jitter may deliver stronger updates. This would explain why the rapidly plateauing code benchmarks benefit the most. The identical peak step confirms that jitter improves overall training quality without altering convergence speed.

\paragraph{Effectiveness of velocity smoothing.}
The hyperparameter $R$ controls how many non-overlapping observation windows are averaged to estimate descent velocity: $R{=}1$ relies on a single window (reactive but noisy), whereas D$^3$-MOPD adopts $R{=}3$ for smoother estimates. Both variants peak at step~$95$, and $R{=}1$ already achieves $62.17$ ($+0.81$ over vanilla), indicating that even a single-window velocity captures useful convergence information. $R{=}3$ further improves the average to $62.34$ by mitigating random r-KL variance. This smoothing prevents a single noise-induced r-KL fluctuation from temporarily zeroing the velocity signal and cutting off budget to actively learning domains. Interestingly, $R{=}1$ surpasses D$^3$-MOPD on BFCL ($63.50$ vs $62.50$), likely because tool-use exhibits high cross-checkpoint variance, where a noisier estimate encourages beneficial exploration.

\subsection{Supplementary Analysis}
\label{ssec:analysis}

\begin{table}[t]
  \vspace{-10pt} 
  \centering
  \small
  \setlength{\tabcolsep}{1pt}
  \caption{Per-benchmark peak accuracy (\%) and normalized score (\S\ref{ssec:eval_setup}) over $256$ rollout steps. Each value represents the maximum across all $16$ checkpoints. $\Delta$ rows show gaps to the domain-expert teacher. Bold marks the column best, underline the runner-up.}
  \vspace{4pt}
  \label{tab:peak_scores}
  \begin{tabular}{lccccccc|c}
    \toprule
    & \multicolumn{2}{c}{Math} & \multicolumn{2}{c}{Instruction Following} & \multicolumn{2}{c}{Code} & Tool-use & \\
    \cmidrule(lr){2-3} \cmidrule(lr){4-5} \cmidrule(lr){6-7} \cmidrule(lr){8-8}
    Method & AIME25 & HMMT(N) & IFBench & IFEval & LCB-v6 & OJB-C++ & BFCL(B) & ~Norm. \\
    \midrule
    Teacher & \textbf{76.2} & \underline{72.3} & \textbf{52.1} & \underline{91.9} & \textbf{64.8} & \underline{28.5} & \textbf{67.0} & \textbf{1.00} \\
    \midrule
    Student & 69.8 & 70.7 & 30.0 & 86.8 & 60.0 & 25.9 & 57.5 & 0.00 \\
    {\scriptsize $\Delta$(Student $-$ Teacher)} & {\scriptsize $-$6.4} & {\scriptsize $-$1.6} & {\scriptsize $-$22.1} & {\scriptsize $-$5.1} & {\scriptsize $-$4.8} & {\scriptsize $-$2.6} & {\scriptsize $-$9.5} & {\scriptsize $-$1.00} \\
    \midrule
    Vanilla MOPD & 73.9 & 72.0 & 46.1 & 91.1 & 61.5 & 27.6 & 61.5 & 0.63 \\
    {\scriptsize $\Delta$(Vanilla $-$ Teacher)} & {\scriptsize $-$2.3} & {\scriptsize $-$0.3} & {\scriptsize $-$6.0} & {\scriptsize $-$0.8} & {\scriptsize $-$3.3} & {\scriptsize $-$0.9} & {\scriptsize $-$5.5} & {\scriptsize $-$0.37} \\
    \midrule
    \textbf{D$^3$-MOPD (Ours)} & \underline{74.1} & \textbf{73.2} & \underline{48.9} & \textbf{92.3} & \underline{63.2} & \textbf{29.7} & \underline{62.5} & \underline{0.97} \\
    {\scriptsize $\Delta$(D$^3$-MOPD $-$ Teacher)} & {\scriptsize $-$2.1} & {\scriptsize \color[HTML]{479A5F}$+$0.9} & {\scriptsize $-$3.2} & {\scriptsize \color[HTML]{479A5F}$+$0.4} & {\scriptsize $-$1.6} & {\scriptsize \color[HTML]{479A5F}$+$1.2} & {\scriptsize $-$4.5} & {\scriptsize $-$0.03} \\
    \bottomrule
  \end{tabular}
  \vspace{-10pt} 
\end{table}

\paragraph{D$^3$-MOPD closes 97\% of the student-teacher gap.}
Table~\ref{tab:peak_scores} compares peak accuracy against the student baseline and domain-expert teachers. Vanilla MOPD closes $63\%$ of the average student-to-teacher gap ($\bar{\hat{s}}=0.63$), whereas D$^3$-MOPD closes $97\%$ of this gap ($\bar{\hat{s}}=0.97$). Consistent with findings by \citet{ma2026mopd}, both methods generally remain below the maximum performance of the domain-expert teachers, indicating that distillation primarily closes existing capability gaps rather than generating novel skills.
Notably, D$^3$-MOPD surpasses the teacher on three benchmarks (HMMT(N), IFEval, and OJBench~C++, $\hat{s}_b > 1.0$), whereas vanilla MOPD remains below the teacher on all tasks. Because these three benchmarks exhibit small initial student-to-teacher absolute accuracy gaps ($\Delta \le 5.1$), even modest absolute improvements translate into normalized scores exceeding $1.0$. Appendix~\ref{app:best_avg_ckpt} further compares the two methods at their respective best-average checkpoints, confirming that D$^3$-MOPD's improvement persists under this setting.

\begin{wrapfigure}{r}{0.5\linewidth}
  \centering
  \vspace{-15pt}
  \includegraphics[width=\linewidth]{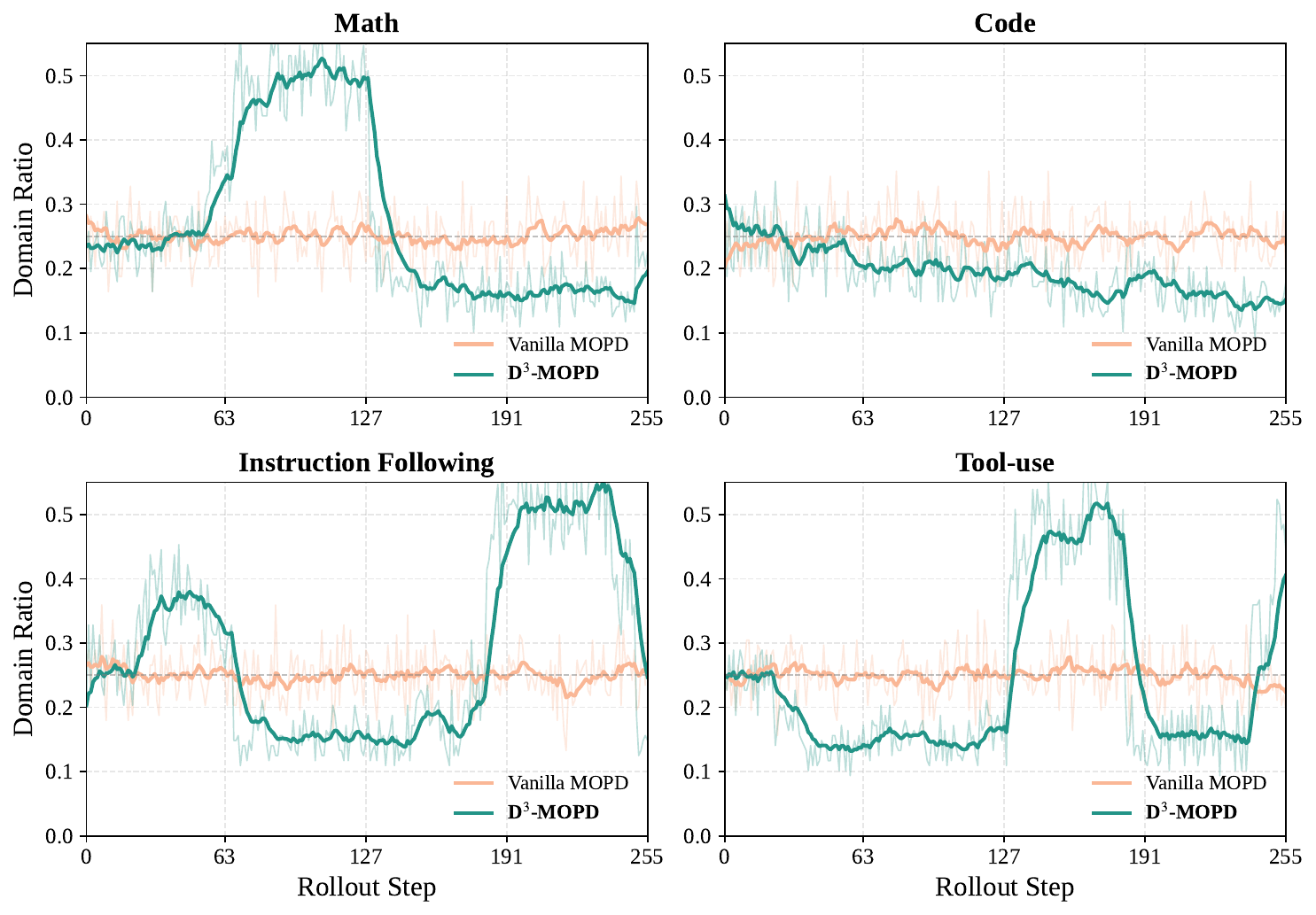}
  \vspace{-15pt}
  \caption{Per-domain sampling ratio of vanilla MOPD and D$^3$-MOPD. D$^3$-MOPD dynamically reallocates budget as domains converge.}
  \label{fig:ratio}
  \vspace{-7pt}
\end{wrapfigure}

\paragraph{D$^3$-MOPD shifts budget from converging to active domains.}
Figure~\ref{fig:ratio} shows the per-domain sampling ratio throughout training. While vanilla MOPD maintains all four domains near $0.25$ with only batch-level noise, D$^3$-MOPD dynamically adapts this allocation, closely tracking domain convergence. Code converges fastest (its r-KL plateaus first in Figure~\ref{fig:mopd_fixed}) and is steadily downsampled from $0.25$ to ${\sim}0.15$ throughout training. The freed budget flows initially to Math, whose ratio rises to ${\sim}0.50$ during steps 60--127 as the composite signal identifies Math as the domain with the largest remaining gap and active descent. Once Math converges after step~$127$, its ratio recedes, and the scheduler redirects budget toward Instruction Following and Tool-use, pushing their ratios to ${\sim}0.55$ and ${\sim}0.50$ after step~$150$. 
Because these two domains exhibit the slowest r-KL convergence, this late-stage budget increase enables the student to sustain progress where capacity for improvement remains high. Ultimately, this dynamic reallocation functions as an implicit curriculum: each domain receives concentrated training during the window when it can still improve, rather than having the budget split uniformly regardless of learning state.
\vspace{-0.5em}
\section{Related Work}
\label{sec:related}

\subsection{On-Policy Distillation}
On-policy distillation (OPD) trains a student on its own generations, supervised by the teacher's distribution under a reverse Kullback-Leibler objective~\citep{gu2024minillm,agarwal2024policy}. Recent work refines this supervision at the token or sample level: \citet{jin2026entropy} switch between forward and reverse KL based on the teacher's entropy, \citet{li2026filter} filter and reweight tokens to suppress noisy gradients, and \citet{hou2026uni} correct the mismatch between per-token KL and sequence-level outcomes. Other directions stabilize the learning target~\citep{jang-etal-2026-stable}, distill selectively from reasoning prefixes~\citep{zhang-etal-2026-fast}, or extend beyond the teacher via reward signals~\citep{yang2026learningteachergeneralizedonpolicy}. \citet{li2026rethinkingonpolicydistillationlarge} provide a systematic analysis of these design choices. These methods operate on the local supervision signal, improving which tokens or samples the student learns from. In this paper, we address a complementary dimension: instead of refining per-token supervision, we dynamically adjust the per-domain data allocation when multiple teachers are involved.

\subsection{Multi-Teacher On-Policy Distillation}
Multi-teacher OPD (MOPD) extends the single-teacher setting by distilling several domain-expert teachers into one student~\citep{ma2026mopd}. Recent large-scale post-training pipelines adopt MOPD-style distillation: MiMo-V2-Flash~\citep{coreteam2026mimov2flashtechnicalreport} and DeepSeek-V4~\citep{deepseekai2026deepseekv4highlyefficientmilliontoken} use it for capability integration, while Baichuan-M3~\citep{m3team2026baichuanm3modelingclinicalinquiry}, Nemotron-Cascade~2~\citep{yang2026nemotron}, and GLM-5~\citep{zeng2026glm} apply it in vertical-domain or staged pipelines. \citet{chen2026counteraction} further observe that domain teachers can pull the student in conflicting directions, and mitigate this through alternating per-domain training; \citet{shen2026diagnosingcalibratingtoolcallboundary} and \citet{yin2026hopdconfidenceawareheterogeneous} address failure modes such as tool-call boundary drift and heterogeneous teacher confidence. However, all existing methods keep the per-domain data ratio fixed throughout training and therefore cannot react to the asynchronous convergence documented in \S\ref{sec:motivation}. In this paper, we revisit MOPD from a scheduling perspective: the per-domain data ratio is treated as a continuous variable, updated online from the reverse-KL signal the training loop already produces.

\section{Conclusion}
\label{sec:conclusion}

This paper identifies and addresses a systematic inefficiency in multi-teacher on-policy distillation (MOPD) arising from the mismatch between its static domain mixture and the student's asynchronous per-domain convergence rates. D$^3$-MOPD resolves this by combining each domain's remaining KL gap with its descent velocity into a composite signal and mapping the result to an updated sampling ratio through a softmax-floor formulation, all without modifying the core training loop. Experiments across four domains with a Qwen3.6-35B-A3B student show that D$^3$-MOPD closes $97\%$ of the student-teacher performance gap (vs.\ $63\%$ for vanilla MOPD) while reaching the baseline's peak accuracy in $3\times$ fewer rollout steps. Ablation studies confirm that each design choice, namely the composite signal, batch jitter, and velocity smoothing, contributes to both peak accuracy and convergence speed. Trajectory analysis reveals an emergent implicit curriculum in which the scheduler concentrates budget on each domain during its active learning window.

\bibliographystyle{styles/colm2026_conference}
\bibliography{bibliography/references}

\appendix
\newpage
\section{Contributors}
\label{app:contributors}
\noindent
{Authors are listed in order of contribution.}

\vspace{4pt}
\begingroup
\renewcommand{\thefootnote}{\fnsymbol{footnote}}
\noindent
{
Zechen Sun\footnotemark[1],
Zhiwei Zhang\footnotemark[1],
Fei Zhao\footnotemark[2],
Juntao Li\textsuperscript{\faEnvelope[regular]}, 
Huayu Deng,
Guojian Zhan,
Wenliang Chen,
Zhaokai Luo,
Yao Hu,
Mu Chuan\textsuperscript{\faEnvelope[regular]}}

\footnotetext[1]{Equal contribution.}
\footnotetext[2]{Project lead.}

\begingroup
\renewcommand{\thefootnote}{\relax} 
\footnotetext[3]{\faEnvelope[regular]~Corresponding authors.} 
\endgroup

\endgroup

\section{Training Hyperparameters and Infrastructure}
\label{app:training_hparams}

All main and ablation runs share the hyperparameters below; only the mixture-scheduling components (\S\ref{ssec:algorithm}) differ across variants.

\subsection{Framework, Watcher, and Data Pipeline}
\label{app:watcher_details}
The slime~\citep{slime_github} framework employs an asynchronous rollout-update loop that overlaps teacher prefill with student generation. On top of it, the composite-signal watcher operates as an off-process job alongside the trainer following \S\ref{ssec:architecture}: it recomputes the mixture $\{p_k\}$ every $n=10$ rollout steps and delivers it to the stratified data source via a status file that the trainer never blocks on. Training data is formatted as single-turn message lists in JSONL with per-sample domain tags, which the stratified data source consumes to compose each batch under the target mixture.

\subsection{\texorpdfstring{D$^3$}{D3}-MOPD Scheduler}
Table~\ref{tab:watcher_hparams} lists the scheduler settings used for the D$^3$-MOPD composite-signal method.

\begin{table}[h]
\vspace{-10pt}
\centering
\small
\caption{D$^3$-MOPD scheduler hyperparameters.}
\vspace{4pt}
\label{tab:watcher_hparams}
\begin{tabular}{lll}
\toprule
Symbol & Description & Value \\
\midrule
$n$        & Watcher update cadence (rollout steps)                          & $10$ \\
$W$        & Window length in the descent-velocity estimate (rollout steps)  & $10$ \\
$R$        & Number of non-overlapping windows averaged in $v_k$              & $3$ \\
$S_0$      & Seed length for the initial-KL normalizer $\mathrm{KL}_k^{(0)}$ & $5$ \\
--         & EMA window applied to raw per-step KL                           & $10$ \\
--         & Numerical KL floor ($\epsilon_{\mathrm{KL}}$) in denominators   & $0.15$ \\
$T$        & Softmax temperature in Eq.~\ref{eq:ratio}                       & $0.5$ \\
$\epsilon$ & Per-domain mixture floor in Eq.~\ref{eq:ratio}                  & $0.10$ \\
$\eta$     & Batch-level jitter amplitude                                    & $0.30$ \\
--         & File-poll interval of the watcher process (seconds)             & $300$ \\
\bottomrule
\end{tabular}
\end{table}

\subsection{Student Training}
The student is trained with slime~\citep{slime_github} on top of the Megatron backend; rollout inference is handled by an SGLang engine group co-located with the trainer. All runs use $7$ nodes of GPUs: $4$ trainer nodes, $2$ SGLang rollout nodes, and $1$ teacher node shared by $K{=}4$ teachers, connected via RDMA InfiniBand. Table~\ref{tab:training_hparams} lists the training-loop hyperparameters, and Table~\ref{tab:mopd_perf_hparams} lists parallelism and performance settings.

\begin{table}[h]
\vspace{-10pt}
\centering
\small
\caption{Training hyperparameters.}
\vspace{4pt}
\label{tab:training_hparams}
\begin{tabular}{ll}
\toprule
Setting & Value \\
\midrule
Student initialization                             & Qwen3.6-35B-A3B \\
Number of teachers $K$                             & $4$ \\
Total rollout steps                                & $256$ \\
Rollout batch size $B$ (prompts per step)          & $128$ \\
Responses sampled per prompt                       & $4$ \\
Global mini-batch size                             & $128$ \\
Gradient-accumulation micro-steps per rollout step & $4$ \\
Optimizer                                          & Adam \\
Learning rate                                      & $1 \times 10^{-6}$ \\
Learning-rate schedule                             & constant \\
Weight decay                                       & $0.1$ \\
Adam $(\beta_1, \beta_2)$                          & $(0.9, 0.98)$ \\
Policy-loss variant (advantage estimator)          & CISPO~\citep{chen2025minimax} \\
Clip ratio (low, high)                             & $(0.2, 0.2)$ \\
OPD reverse-KL coefficient                         & $1.0$ \\
Max prompt length                                  & $16{,}384$ tokens \\
Max response length                                & $8{,}192$ tokens \\
Rollout sampling temperature                       & $1.0$ \\
Chat-template thinking mode                        & disabled \\
\bottomrule
\end{tabular}
\end{table}

\begin{table}[h]
\centering
\small
\caption{Parallelism and performance settings (Megatron backend).}
\vspace{4pt}
\label{tab:mopd_perf_hparams}
\begin{tabular}{ll}
\toprule
Setting & Value \\
\midrule
Tensor-model parallel size                          & $2$ \\
Context-parallel size                               & $4$ \\
Expert-model parallel size                          & $8$ \\
Max tokens per GPU per micro-batch                  & $6{,}144$ \\
SGLang rollout: GPUs per engine                     & $4$ \\
\bottomrule
\end{tabular}
\end{table}

\paragraph{Throughput overhead.}
Because the watcher runs as a standalone process that reads the training log and writes updated ratios to a status file without requiring the trainer to pause, it adds no synchronization overhead. Over the full $256$-step run the mean throughputs of D$^3$-MOPD ($520$ tokens/GPU/s) and vanilla MOPD ($531$ tokens/GPU/s) differ by only $2.1\%$, a gap attributable to the shift in sequence-length distribution as the scheduler redirects budget toward longer-response domains rather than to any computational overhead from the watcher.

\section{Evaluation Details}
\label{app:eval_details}

\paragraph{Sampling hyperparameters.}
We generate responses in non-thinking mode with temperature $0.7$, top-$p$ $0.8$, top-$k$ $20$, and presence penalty $1.5$, following the Qwen3.6-35B-A3B recommendations.

\paragraph{Per-benchmark metrics.}
We report avg@64 for AIME 2025, avg@32 for HMMT November 2025, and avg@6 for LiveCodeBench Code Generation; OJBench C++, IFBench, IFEval, and BFCL v3 Multi-Turn (Base) follow their respective standard protocols.

\section{Best-Average Checkpoint Comparison}
\label{app:best_avg_ckpt}

Table~\ref{tab:peak_scores} reports the per-benchmark \emph{peak} accuracy, where each benchmark's best value may come from a different checkpoint. Table~\ref{tab:best_avg_ckpt} instead reports the accuracy at the single checkpoint with the highest average score, reflecting the performance of a single deployable model.

\begin{table}[h]
  \vspace{-10pt}
  \centering
  \small
  \setlength{\tabcolsep}{1pt}
  \caption{Per-benchmark accuracy (\%) and normalized score at the best-average checkpoint (vanilla MOPD: step~$143$; D$^3$-MOPD: step~$95$). The Teacher row reports each domain's dedicated expert model (one specialized model per domain, not a single model). Unlike Table~\ref{tab:peak_scores}, all values in each row come from a single checkpoint. Bold marks the column best, underline the runner-up.}
  \vspace{4pt}
  \label{tab:best_avg_ckpt}
  \begin{tabular}{lccccccc|c}
    \toprule
    & \multicolumn{2}{c}{Math} & \multicolumn{2}{c}{Instruction Following} & \multicolumn{2}{c}{Code} & Tool-use & \\
    \cmidrule(lr){2-3} \cmidrule(lr){4-5} \cmidrule(lr){6-7} \cmidrule(lr){8-8}
    Method & AIME25 & HMMT(N) & IFBench & IFEval & LCB-v6 & OJB-C++ & BFCL(B) & Norm. \\
    \midrule
    Teacher & \textbf{76.2} & \textbf{72.3} & \textbf{52.1} & \textbf{91.9} & \textbf{64.8} & \underline{28.5} & \textbf{67.0} & \textbf{1.00} \\
    \midrule
    Student & 69.8 & 70.7 & 30.0 & 86.8 & 60.0 & 25.9 & 57.5 & 0.00 \\
    \midrule
    Vanilla MOPD {\scriptsize (step 143)} & 73.2 & 71.0 & \underline{45.7} & 91.0 & 61.0 & 27.6 & 60.0 & 0.48 \\
    {\scriptsize $\Delta(\text{Vanilla} - \text{Teacher})$} & {\scriptsize $-$3.0} & {\scriptsize $-$1.3} & {\scriptsize $-$6.4} & {\scriptsize $-$0.9} & {\scriptsize $-$3.8} & {\scriptsize $-$0.9} & {\scriptsize $-$7.0} & {\scriptsize $-$0.52} \\
    \midrule
    \textbf{D$^3$-MOPD} {\scriptsize (step 95)} & \underline{74.0} & \underline{71.4} & 45.1 & \underline{91.2} & \underline{62.5} & \textbf{29.7} & \underline{62.5} & \underline{0.73} \\
    {\scriptsize $\Delta(\text{D}^3\text{-MOPD} - \text{Teacher})$} & {\scriptsize $-$2.2} & {\scriptsize $-$0.9} & {\scriptsize $-$7.0} & {\scriptsize $-$0.7} & {\scriptsize $-$2.3} & {\scriptsize \color[HTML]{479A5F}$+$1.2} & {\scriptsize $-$4.5} & {\scriptsize $-$0.27} \\
    {\scriptsize $\Delta(\text{D}^3\text{-MOPD} - \text{Vanilla})$} & {\scriptsize \color[HTML]{479A5F}$+$0.8} & {\scriptsize \color[HTML]{479A5F}$+$0.4} & {\scriptsize $-$0.6} & {\scriptsize \color[HTML]{479A5F}$+$0.2} & {\scriptsize \color[HTML]{479A5F}$+$1.5} & {\scriptsize \color[HTML]{479A5F}$+$2.1} & {\scriptsize \color[HTML]{479A5F}$+$2.5} & {\scriptsize \color[HTML]{479A5F}$+$0.25} \\
    \bottomrule
  \end{tabular}
\end{table}

\paragraph{Comparison with per-benchmark peaks.}
Compared with Table~\ref{tab:peak_scores}, the normalized scores drop for both methods (vanilla MOPD: $0.63 \to 0.48$; D$^3$-MOPD: $0.97 \to 0.73$), since no single checkpoint can maximize every benchmark at once. The drop is concentrated in benchmarks whose peaks occur far from the best-average step: for example, D$^3$-MOPD's HMMT(N) falls from $73.2$ (per-benchmark peak) to $71.4$ at step~$95$, indicating that HMMT(N) peaks at a later checkpoint. Nevertheless, D$^3$-MOPD at step~$95$ still outperforms vanilla MOPD at step~$143$ on six of seven benchmarks and achieves a notably higher normalized score ($0.73$ vs.\ $0.48$), confirming that the improvement holds when deploying a single checkpoint.

\section{Generalization to 4B Student}
\label{app:4b_exp}

To verify that D$^3$-MOPD generalizes across model scales, we replicate the training pipeline with Qwen3.5-4B~\citep{qwen3.5} as the student. The four domain-specific teachers are obtained by GRPO-training the same model on the corresponding domain data used for the Qwen3.6-35B-A3B experiments. Training hyperparameters (Table~\ref{tab:training_hparams}) and scheduler settings (Table~\ref{tab:watcher_hparams}) follow the same configuration with minor scale-related adjustments.

\begin{table}[h]
  \vspace{-10pt}
  \centering
  \small
  \setlength{\tabcolsep}{1pt}
  \caption{Per-benchmark accuracy (\%) and normalized score with Qwen3.5-4B student (vanilla MOPD: step~$159$; D$^3$-MOPD: step~$119$). The Teacher row reports each domain's dedicated GRPO-trained 4B expert. Bold marks the column best, underline the runner-up.}
  \vspace{4pt}
  \label{tab:4b_results}
  \begin{tabular}{lccccccc|c}
    \toprule
    & \multicolumn{2}{c}{Math} & \multicolumn{2}{c}{Instruction Following} & \multicolumn{2}{c}{Code} & Tool-use & \\
    \cmidrule(lr){2-3} \cmidrule(lr){4-5} \cmidrule(lr){6-7} \cmidrule(lr){8-8}
    Method & AIME25 & HMMT(N) & IFBench & IFEval & LCB-v6 & OJB-C++ & BFCL(B) & Norm. \\
    \midrule
    Teacher & \textbf{63.4} & \textbf{66.4} & \textbf{55.9} & \textbf{85.3} & 53.3 & 18.1 & \underline{63.0} & \underline{1.00} \\
    \midrule
    Student & 47.8 & 51.7 & 35.9 & 85.3 & 39.4 & 13.8 & 51.0 & 0.00 \\
    \midrule
    Vanilla MOPD {\scriptsize (step 159)} & 59.3 & 63.4 & 45.4 & 82.7 & \underline{53.7} & \underline{18.5} & \underline{63.0} & 0.86 \\
    {\scriptsize $\Delta(\text{Vanilla} - \text{Teacher})$} & {\scriptsize $-$4.1} & {\scriptsize $-$3.0} & {\scriptsize $-$10.5} & {\scriptsize $-$2.6} & {\scriptsize \color[HTML]{479A5F}$+$0.4} & {\scriptsize \color[HTML]{479A5F}$+$0.4} & {\scriptsize 0.0} & {\scriptsize $-$0.14} \\
    \midrule
    \textbf{D$^3$-MOPD} {\scriptsize (step 119)} & \underline{61.8} & \underline{64.4} & \underline{49.5} & \underline{84.5} & \textbf{54.6} & \textbf{19.8} & \textbf{64.5} & \textbf{1.01} \\
    {\scriptsize $\Delta(\text{D}^3\text{-MOPD} - \text{Teacher})$} & {\scriptsize $-$1.6} & {\scriptsize $-$2.0} & {\scriptsize $-$6.4} & {\scriptsize $-$0.8} & {\scriptsize \color[HTML]{479A5F}$+$1.3} & {\scriptsize \color[HTML]{479A5F}$+$1.7} & {\scriptsize \color[HTML]{479A5F}$+$1.5} & {\scriptsize \color[HTML]{479A5F}$+$0.01} \\
    {\scriptsize $\Delta(\text{D}^3\text{-MOPD} - \text{Vanilla})$} & {\scriptsize \color[HTML]{479A5F}$+$2.5} & {\scriptsize \color[HTML]{479A5F}$+$1.0} & {\scriptsize \color[HTML]{479A5F}$+$4.1} & {\scriptsize \color[HTML]{479A5F}$+$1.8} & {\scriptsize \color[HTML]{479A5F}$+$0.9} & {\scriptsize \color[HTML]{479A5F}$+$1.3} & {\scriptsize \color[HTML]{479A5F}$+$1.5} & {\scriptsize \color[HTML]{479A5F}$+$0.15} \\
    \bottomrule
  \end{tabular}
\end{table}

\paragraph{Results.}
Table~\ref{tab:4b_results} shows that the improvement pattern observed at the 35B-A3B scale carries over to the 4B student. D$^3$-MOPD outperforms vanilla MOPD on all seven benchmarks, with per-benchmark gains ranging from $+0.9$ (LCB-v6) to $+4.1$ (IFBench). The normalized score of D$^3$-MOPD ($1.01$) slightly surpasses the composite teacher ceiling ($1.00$), whereas vanilla MOPD reaches $0.86$. Both methods surpass the domain-expert teachers on the code benchmarks (LCB-v6 and OJB-C++), and D$^3$-MOPD additionally exceeds the tool-use teacher on BFCL(B) by $+1.5$. The remaining gap to the teacher is largest on IFBench ($-6.4$), consistent with instruction following being the hardest domain for distillation at both model scales.

\paragraph{Convergence speed.}
D$^3$-MOPD reaches its best-average checkpoint $40$ steps earlier than vanilla MOPD (step~$119$ vs.\ step~$159$), a $25\%$ reduction in the number of training steps to achieve peak average performance. This faster convergence aligns with the 35B-A3B result (step~$95$ vs.\ step~$143$, Table~\ref{tab:best_avg_ckpt}), indicating that dynamic domain scheduling accelerates convergence across model scales.

\section{Theoretical Analysis of Composite-Signal Allocation}
\label{app:theory}

We provide a simplified analysis showing that under an exponential decay model, the composite signal $s_k = \widetilde{\mathrm{KL}}_k \cdot v_k$ is proportional to the marginal reduction in the normalized gap, justifying its use as a greedy-optimal allocation criterion.

\paragraph{Exponential decay model.}
We model each domain's reverse-KL as decaying exponentially in its cumulative training exposure:
\begin{equation}
    \mathrm{KL}_k(n_k) = \mathrm{KL}_k(0) \cdot e^{-\lambda_k n_k},
    \label{eq:exp_decay}
\end{equation}
where $n_k$ is the cumulative sample count allocated to domain $k$ and $\lambda_k > 0$ is a domain-specific learning rate. This captures the empirical observation (\S\ref{ssec:dynamics}) that per-domain KL decreases roughly exponentially at domain-specific rates.

\begin{proposition}
\label{prop:greedy}
Under the exponential decay model (Eq.~\ref{eq:exp_decay}), with batch size $B$, window length $W$, uniform allocation $p_k = 1/K$, and $\lambda_k B\,W/K \ll 1$, the composite signal satisfies
\begin{equation}
    s_k \;=\; \widetilde{\mathrm{KL}}_k \cdot v_k \;\approx\; \frac{B\,W}{K} \cdot \left|\frac{\partial \widetilde{\mathrm{KL}}_k}{\partial n_k}\right|.
\end{equation}
Allocating samples to $\argmax_k s_k$ thus maximizes the instantaneous reduction in the average normalized gap $\bar{g} = \frac{1}{K}\sum_k \widetilde{\mathrm{KL}}_k$.
\end{proposition}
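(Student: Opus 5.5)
The plan is to compute both sides of the claimed approximation explicitly under the exponential decay model (Eq.~\ref{eq:exp_decay}) and compare them to first order in the small parameter $\lambda_k B W/K$. We idealize the watcher's quantities throughout. The EMA $\overline{\mathrm{KL}}_k$ is replaced by the noiseless curve $\mathrm{KL}_k(n_k)$. The seed $\mathrm{KL}_k^{(0)}$ is identified with $\mathrm{KL}_k(0)$, with the $S_0$-step seeding treated as negligible. Under uniform allocation $p_k=1/K$, domain $k$ accrues exactly $BW/K$ samples per window of $W$ rollout steps. With these choices the remaining gap becomes $\widetilde{\mathrm{KL}}_k = e^{-\lambda_k n_k}$, where $n_k$ is the current cumulative exposure. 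Its derivative is $\partial \widetilde{\mathrm{KL}}_k/\partial n_k = -\lambda_k e^{-\lambda_k n_k}$.

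Next we compute the velocity. Set $\Delta = BW/K$. Each single-window relative change (Eq.~\ref{eq:delta_window}) is
\[
\delta_k^{(i)} = \frac{e^{-\lambda_k(n_k - i\Delta)}}{e^{-\lambda_k(n_k-(i+1)\Delta)}} - 1 = e^{-\lambda_k \Delta} - 1 .
\]
This is independent of $i$, which is the key simplification of the exponential model: averaging over $R$ windows changes nothing, and the clip at zero is inactive. Hence $v_k = 1 - e^{-\lambda_k\Delta}$ exactly. Taylor expanding under the hypothesis $\lambda_k\Delta \ll 1$ gives $v_k = \lambda_k\Delta + O\bigl((\lambda_k\Delta)^2\bigr)$.

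Multiplying the two pieces gives
\[
s_k \approx \Delta\,\lambda_k e^{-\lambda_k n_k} = \frac{BW}{K}\left|\frac{\partial \widetilde{\mathrm{KL}}_k}{\partial n_k}\right|,
\]
with relative error $O(\lambda_k\Delta)$. This error is uniform across $k$ in the sense that each domain's error is controlled by its own small parameter.

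For the greedy claim, consider allocating an infinitesimal batch of $dn$ samples to a single domain $j$. The first-order change in $\bar g$ is $\frac{1}{K}\,\frac{\partial \widetilde{\mathrm{KL}}_j}{\partial n_j}\,dn$, since the other terms do not depend on $n_j$. The instantaneous reduction is therefore maximized by $\argmax_j |\partial \widetilde{\mathrm{KL}}_j/\partial n_j|$. Because $s_j$ equals this quantity times the common positive factor $BW/K$, up to the approximation, the two maximizers agree. More generally, for any split $\{dn_k\}$ with $\sum_k dn_k = dn$, the first-order reduction is linear in the $dn_k$. A linear function over the simplex is maximized at a vertex, namely the argmax.

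The main obstacle is rigor in the last step. Because $s_k$ is only approximately proportional to the marginal reduction, the argmax of $s_k$ coincides with the true argmax only when the top two marginal rates are separated by more than the $O(\lambda_k\Delta)$ relative error. I would state this as a caveat rather than prove a tie-breaking bound. A second caveat is that the identification $\mathrm{KL}_k^{(0)}=\mathrm{KL}_k(0)$ and the noiseless EMA are modeling idealizations, consistent with the proposition's ``simplified analysis'' framing.
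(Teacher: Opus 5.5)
Your proposal is correct and follows the paper's proof essentially step for step: $\widetilde{\mathrm{KL}}_k = e^{-\lambda_k n_k}$, a window-independent relative change $\delta_k^{(i)} = e^{-\lambda_k BW/K}-1$, the exact velocity $v_k = 1-e^{-\lambda_k BW/K}$ with its first-order expansion, and finally the observation that the common factor $BW/K$ preserves the ranking across domains. Your caveat sharpens the paper's last step, which asserts the two argmaxes coincide exactly; in fact $s_k$ equals $\frac{BW}{K}\lambda_k\widetilde{\mathrm{KL}}_k$ times the $\lambda_k$-dependent factor $(1-e^{-x})/x$ with $x=\lambda_k BW/K$, so near-ties can in principle be flipped.
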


\begin{proof}
The normalized gap is $\widetilde{\mathrm{KL}}_k = \mathrm{KL}_k(n_k)/\mathrm{KL}_k(0) = e^{-\lambda_k n_k}$, with marginal reduction
\[
    \left|\frac{\partial \widetilde{\mathrm{KL}}_k}{\partial n_k}\right| = \lambda_k \, e^{-\lambda_k n_k} = \lambda_k \, \widetilde{\mathrm{KL}}_k.
\]
Under uniform allocation $p_k = 1/K$, each window of $W$ rollout steps delivers $B\,W/K$ samples to domain $k$. Substituting into Eq.~\ref{eq:delta_window}, the single-window relative change is $\delta_k^{(i)} = e^{-\lambda_k B\,W/K} - 1$, constant across windows $i$. The descent velocity (Eq.~\ref{eq:descent_velocity}) therefore reduces to $v_k = 1 - e^{-\lambda_k B\,W/K}$. A first-order expansion yields $v_k \approx \lambda_k B\,W/K$, giving
\[
    s_k = \widetilde{\mathrm{KL}}_k \cdot v_k \;\approx\; \frac{B\,W}{K} \cdot \lambda_k \, \widetilde{\mathrm{KL}}_k \;=\; \frac{B\,W}{K} \cdot \left|\frac{\partial \widetilde{\mathrm{KL}}_k}{\partial n_k}\right|.
\]
Since $B\,W/K$ is constant across domains, $\argmax_k s_k = \argmax_k |\partial \widetilde{\mathrm{KL}}_k / \partial n_k|$.
\end{proof}

\paragraph{Interpretation.}
Proposition~\ref{prop:greedy} shows that the composite signal ranks domains by the magnitude of their normalized marginal KL reduction. The initial-KL normalization in $\widetilde{\mathrm{KL}}_k$ (Eq.~\ref{eq:normalized_kl}) is essential: without it, domains with large absolute KL (e.g., IF, whose KL is ${\sim}65\times$ that of Math) would dominate the allocation regardless of their relative improvement rate. The first-order condition $\lambda_k B\,W/K \ll 1$ requires per-window KL changes to be small, which is satisfied in practice by the short window length ($W = 10$ steps).

\paragraph{Practical robustness.}
The softmax-floor mapping (Eq.~\ref{eq:ratio}) smooths the greedy $\argmax_k s_k$ allocation: the temperature $T$ distributes samples across domains in proportion to their exponential signal strength rather than concentrating on a single domain, while the floor $\epsilon$ guarantees every domain retains a minimum sample rate, preventing catastrophic forgetting of domains that have temporarily plateaued but may resume descent later in training.

\paragraph{Remark on the exponential decay assumption.}
The exponential decay model is a simplifying assumption that we adopt to derive the proportionality between $s_k$ and the marginal KL reduction in closed form. The method itself does not require this assumption to hold exactly, because the composite signal is self-correcting by construction: if a domain deviates from exponential decay and enters an unexpected plateau, its descent velocity $v_k$ drops toward zero and automatically reduces its allocation regardless of the underlying functional form. The theoretical analysis therefore serves as a design rationale for choosing the gap-velocity product as the allocation criterion rather than a necessary condition for the method to produce effective schedules.

\section{Composite Signal Trajectories}
\label{app:signal_dynamics}

Figure~\ref{fig:signal_dynamics} visualizes how the composite signal $\tilde{s}_k$ and the normalized KL $\widetilde{\mathrm{KL}}_k$ evolve for each domain over the full 256-step training run. Panel~(e) shows the corresponding allocation ratio $p_k$.

The scheduler concentrates budget on whichever domain currently exhibits both a large remaining gap and rapid descent, producing a sequence of distinct allocation phases. In the first phase (steps 20--55), IF receives the highest signal because its initial KL gap is the largest among all domains and it descends rapidly once training begins, causing the allocation to reach approximately $38\%$ while the other three domains receive near-floor shares. As IF's descent velocity slows around step~55, Math takes over as the dominant domain and holds $\tilde{s}_{\text{math}} = 1.0$ for roughly 60 consecutive steps, during which its normalized KL drops steadily from 0.42 to 0.25. The scheduler then redirects budget to Tool-use (steps 120--160) and finally returns to IF (steps 180--250) as IF's accumulated gap reopens after an extended period of reduced allocation.

The self-correcting property of the composite signal is visible in the interplay between the shaded signal regions and the KL curves within each panel. When a domain's signal drops to zero and it receives only the floor allocation $\epsilon = 0.10$, its normalized KL tends to plateau or rise (for example, IF's normalized KL climbs from 0.29 to 0.65 during steps 65--120), which eventually increases the remaining gap component and restores a nonzero signal in a later update. This feedback loop ensures that no domain is permanently neglected even without explicit revisitation logic in the scheduler.

\begin{figure}[h]
    \centering
    \includegraphics[width=0.55\linewidth]{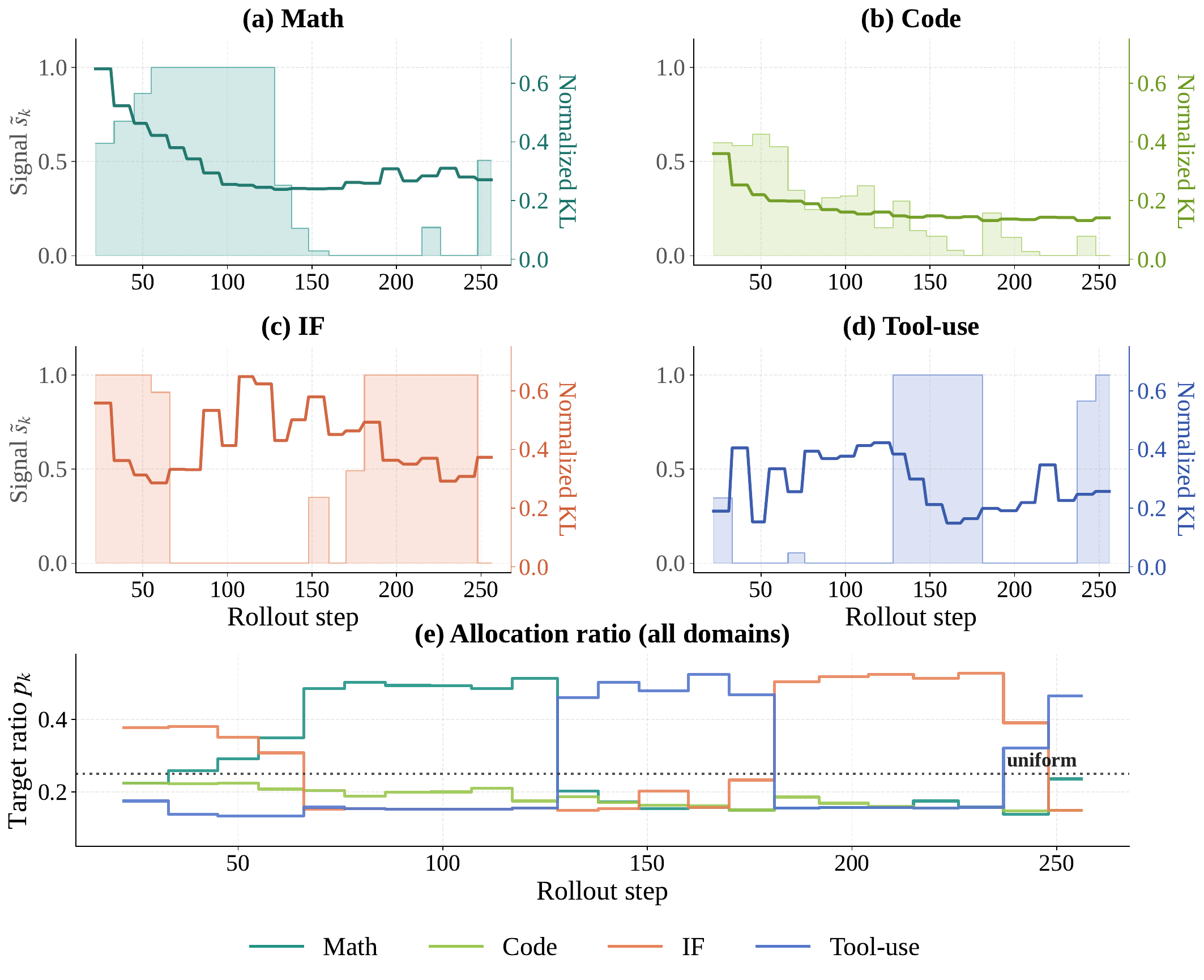}
    \vspace{-15pt}
    \caption{Per-domain composite signal and normalized KL over 256 rollout steps, with the resulting allocation ratio in panel~(e). The scheduler produces a sequence of phase transitions that track the domain with the largest remaining gap and fastest descent at each point in training.}
    \label{fig:signal_dynamics}
\end{figure}

\begin{figure}[h]
    \centering
    \includegraphics[width=\linewidth]{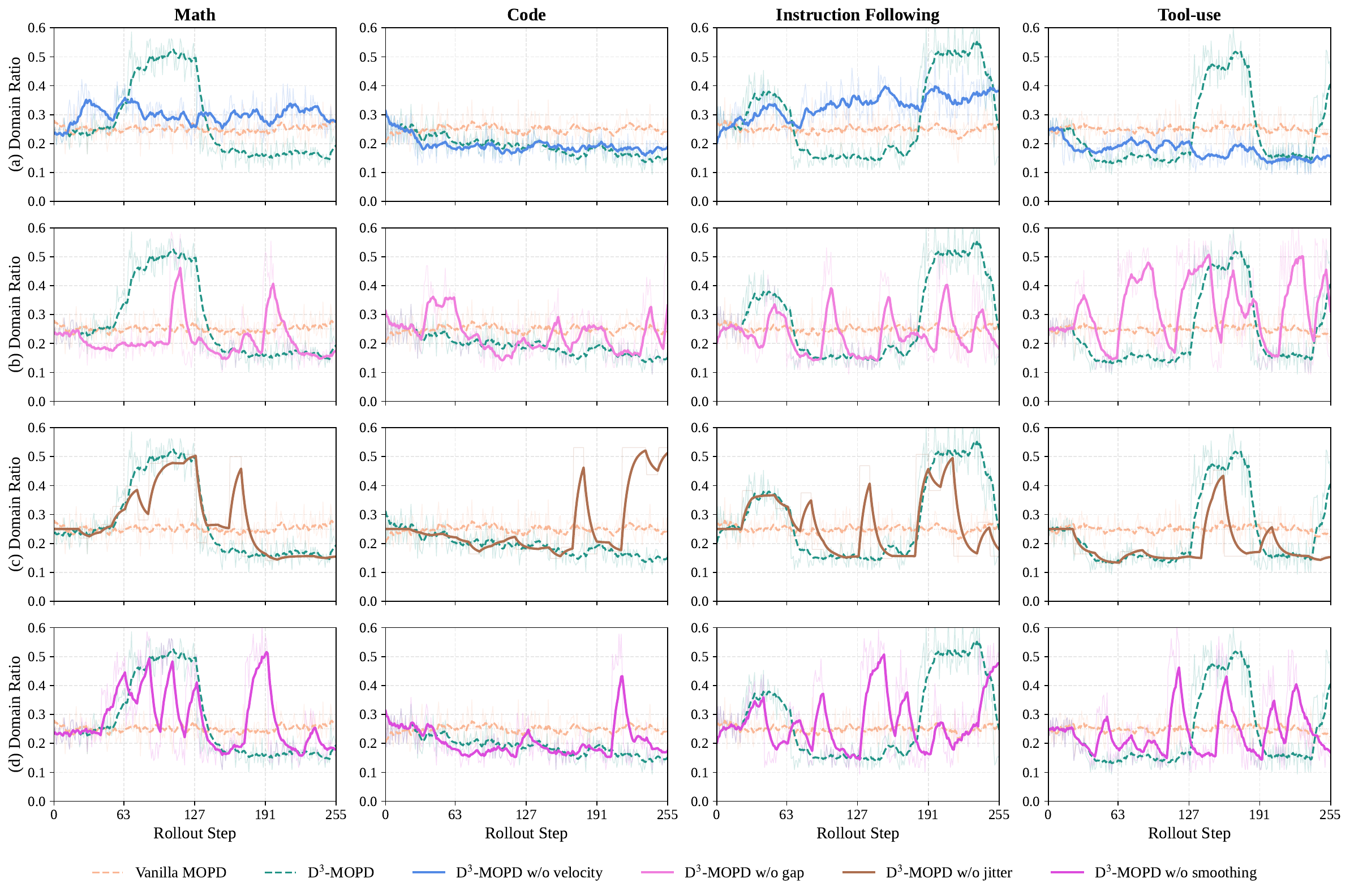}
    \vspace{-15pt}
    \caption{Per-domain sampling ratio of each ablation variant vs.\ D$^3$-MOPD and vanilla MOPD over 256 rollout steps. (a)~w/o velocity, (b)~w/o gap, (c)~w/o jitter, (d)~w/o smoothing.}
    \label{fig:abl_ratio}
  \end{figure}

\section{Ablation Domain Ratio Trajectories}
\label{app:ablation_ratio}

Figure~\ref{fig:abl_ratio} shows the per-domain sampling ratio of each ablation variant alongside vanilla MOPD and D$^3$-MOPD over 256 rollout steps.

\begin{itemize}[leftmargin=*,itemsep=2pt,topsep=4pt]
\item \textbf{w/o velocity (row a).} Without descent velocity, the scheduler relies solely on the KL gap. The ratios are flatter than D$^3$-MOPD: Math rises to ${\sim}0.30$ but never reaches the ${\sim}0.50$ peak of D$^3$-MOPD, and the late-stage shift toward IF and Tool-use is weaker. Because the gap signal alone cannot distinguish an actively descending domain from one plateaued at a high KL, the scheduler spreads budget uniformly.

\item \textbf{w/o gap (row b).} Without the remaining gap, the scheduler uses only descent velocity. Early in training all domains descend rapidly, so the velocity signal cannot differentiate them, producing large oscillations before step~$63$. Later, IF is identified as the fastest-descending domain and heavily upweighted, but the lack of gap information causes abrupt ratio swings (e.g., Math drops to ${\sim}0.10$ around step~$100$ before rebounding). This instability explains why the w/o gap variant peaks latest at step~$255$ (Table~\ref{tab:ablation}).

\item \textbf{w/o jitter (row c).} The overall trajectory shape is similar to D$^3$-MOPD, confirming that jitter does not alter the scheduler's long-run allocation. However, the ratio curves show sharper step-like transitions between watcher updates, most visible in Code near step~$200$ and Tool-use near step~$240$. The corresponding accuracy drops on code benchmarks (Table~\ref{tab:ablation}) suggest that this reduced per-batch variation hurts domains near their r-KL plateau.

\item \textbf{w/o smoothing (row d).} A single observation window ($R{=}1$) makes the velocity estimate more reactive but noisier. The ratio curves oscillate visibly more than D$^3$-MOPD ($R{=}3$), with frequent spikes in Math and Code around steps~$60$--$130$. The overall allocation pattern nonetheless remains close to D$^3$-MOPD, which explains why this variant still achieves $62.17$ ($+0.81$ in Table~\ref{tab:ablation}).

\end{itemize}

\section{Robustness Extensions}
\label{app:robustness}

We describe two optional flags that extend the composite signal (Eq.~\ref{eq:composite}) to address two potential failure modes we found in practice. Both flags are off by default, in which case the scheduler produces the same output as Algorithm~\ref{alg:d3mopd}.

\subsection{Velocity Floor for KL-Rebound Domains}
\label{app:robustness_vfloor}

\paragraph{Failure mode.} The descent velocity $v_k = \max(0, -\tfrac{1}{R}\sum_i \delta_k^{(i)})$ is clipped at zero, so any KL rise over the last $RW$ steps forces $v_k = 0$ regardless of the remaining gap $\widetilde{\mathrm{KL}}_k$. The composite $s_k = \widetilde{\mathrm{KL}}_k \cdot v_k$ then becomes zero and the allocation $p_k$ drops to the floor $\epsilon$. A domain with substantial remaining gap can therefore lose most of its budget only because its KL briefly went up.

\paragraph{Extension.} We replace the hard clip with a soft one controlled by a velocity floor $\phi \in [0, 1]$:
\begin{small}
\begin{equation}
    \tilde{v}_k(t) \;=\; \phi \;+\; (1 - \phi)\,\min\!\bigl(v_k(t),\, 1\bigr),
    \label{eq:vfloor}
\end{equation}
\end{small}
so $\tilde{v}_k \in [\phi, 1]$, which recovers Eq.~\ref{eq:descent_velocity} at $\phi = 0$. Any $\phi > 0$ gives a domain whose velocity is clipped a gap-only fallback $\tilde{s}_k = \phi \cdot \widetilde{\mathrm{KL}}_k$, keeping its budget proportional to the remaining gap.

\paragraph{Empirical validation.} We rerun the main experiment with $\phi = 0.2$ and all other settings unchanged. Figure~\ref{fig:vfloor_if} compares the IF domain under the two settings. IF exhibits a KL rebound window between steps 60 and 150 (shaded band) during which the baseline clips its velocity to zero. As a result (a) the composite signal $\tilde s_\text{IF}$ drops to $0$ and (c) the target ratio $p_\text{IF}$ drops to the floor $\epsilon = 0.10$. This happens even though (b) shows the normalised KL is still $\widetilde{\mathrm{KL}}_\text{IF} \approx 0.5$ and IF is far from converged. With $\phi = 0.2$ the composite signal stays at or above $\phi$ throughout the rebound and $p_\text{IF}$ stays between $0.30$ and $0.40$. IF's own KL rebound is also smaller as a side effect. Figure~\ref{fig:vfloor_signal_dynamics} plots the full four-domain trajectory of the same trial.

\begin{figure}[t]
    \centering
    \includegraphics[width=\linewidth]{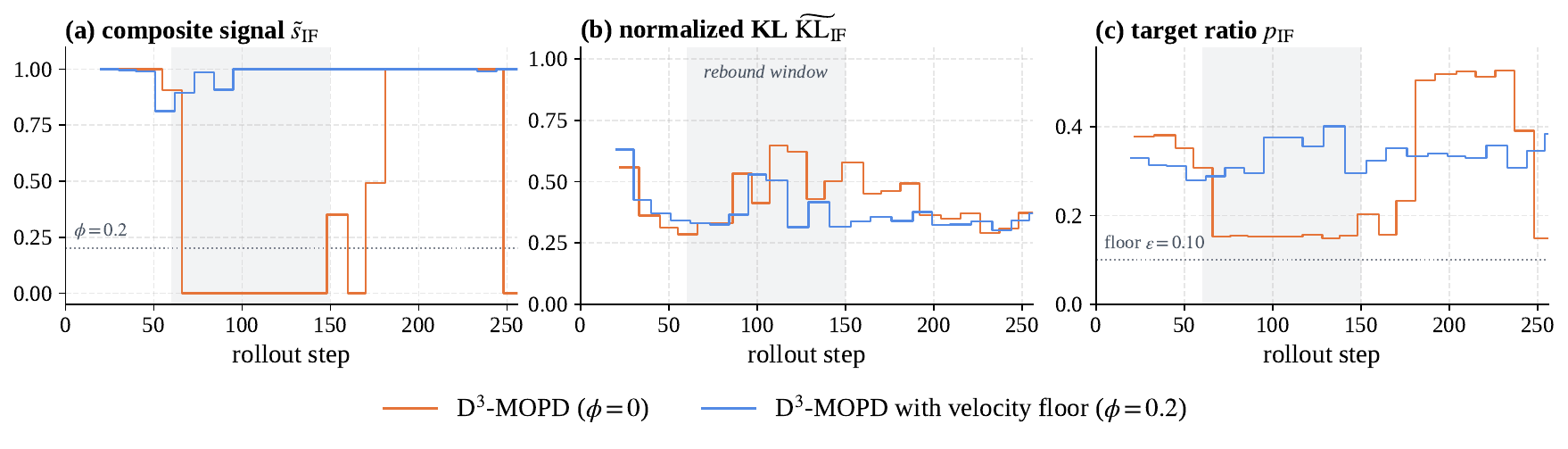}
    \vspace{-20pt}
    \caption{IF domain under $\phi=0.2$ (blue) and $\phi=0$ (orange); shaded band marks the rebound window. $\tilde s_\text{IF}=0$ means IF's velocity is clipped; $\tilde s_\text{IF}=1$ means IF has the strongest signal.}
    \label{fig:vfloor_if}
\end{figure}

\begin{figure}[h]
    \centering
    \includegraphics[width=0.55\linewidth]{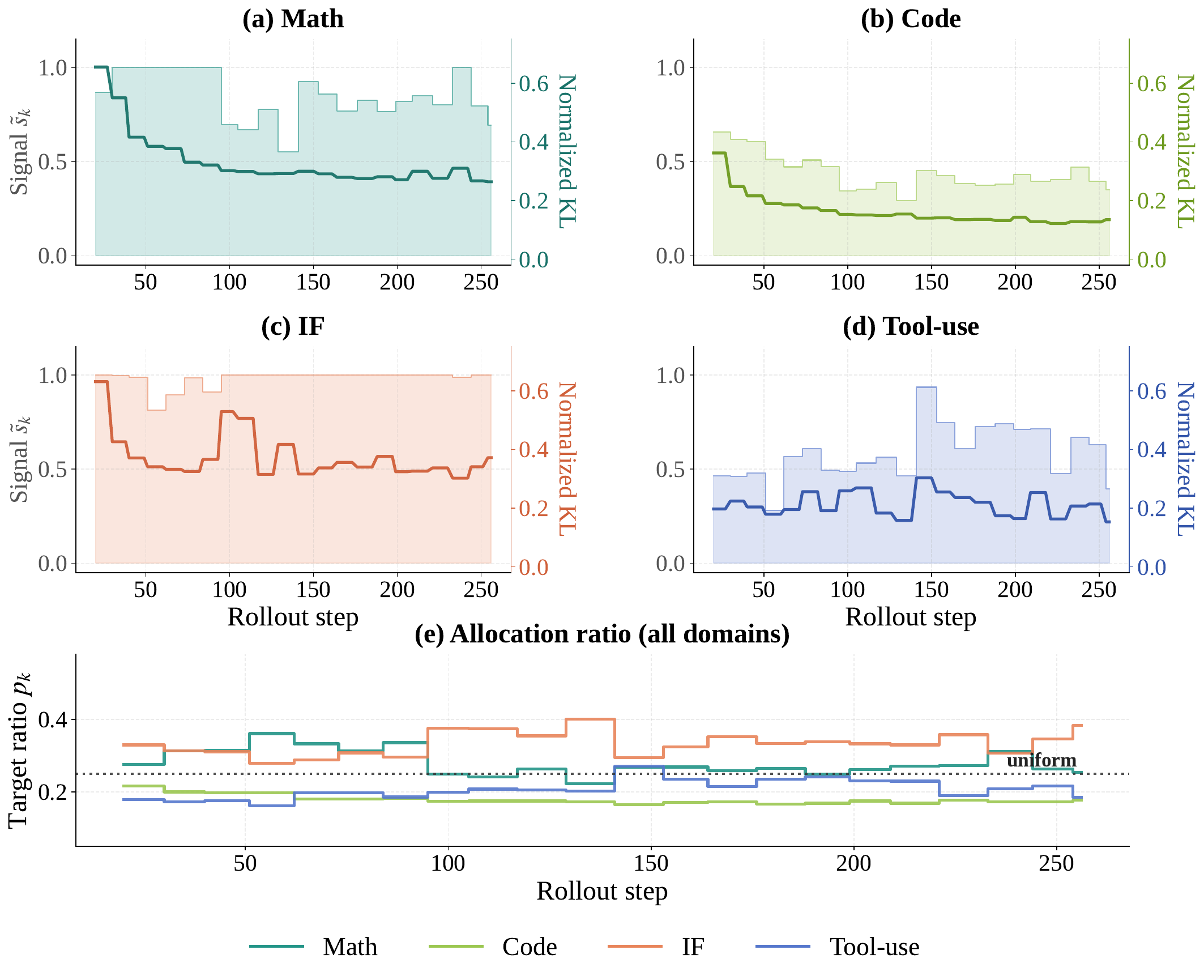}
    \vspace{-10pt}
    \caption{Per-domain composite signal and normalized KL across 256 rollout steps for the velocity floor trial with $\phi = 0.2$, and allocation ratio in panel~(e). Each allocation reflects both the remaining gap and the descent velocity: Math and IF descend from large gaps and receive the most budget, Code releases share as its descent slows, and Tool-use gains budget as its velocity rises later.}
    \label{fig:vfloor_signal_dynamics}
\end{figure}

Table~\ref{tab:vfloor} reports per-benchmark accuracy for vanilla MOPD, D$^3$-MOPD, and the velocity-floor variant. Adding $\phi = 0.2$ raises both Instruction-Following scores over D$^3$-MOPD (IFBench $+0.8$, IFEval $+0.3$), as the extra budget during IF's rebound window trains IF further. Elsewhere the variant stays close to D$^3$-MOPD (average $62.2$ vs.\ $62.3$). For a fair comparison the velocity-floor variant is reported at step $95$ to match D$^3$-MOPD, whereas vanilla MOPD and D$^3$-MOPD are shown at their best-average checkpoints (step $143$ and step $95$) selected from all $16$ evaluated checkpoints. Other checkpoints of the velocity-floor variant may exceed D$^3$-MOPD's average.
\begin{table}[h]
\centering
\small
\setlength{\tabcolsep}{1pt}
\caption{Per-benchmark accuracy for the velocity floor ablation. The variant with $\phi = 0.2$ scores highest on the shaded IF columns and LCB-v6, and stays close to D$^3$-MOPD elsewhere.}
\vspace{4pt}
\begin{tabular}{l cc >{\columncolor{gray!15}}c >{\columncolor{gray!15}}c cc c c}
\toprule
& \multicolumn{2}{c}{Math} & \multicolumn{2}{>{\columncolor{gray!15}}c}{Instruction Following} & \multicolumn{2}{c}{Code} & Tool-use & \\
\cmidrule(lr){2-3} \cmidrule(lr){4-5} \cmidrule(lr){6-7} \cmidrule(lr){8-8}
Method & AIME25 & HMMT(N) & IFBench & IFEval & LCB-v6 & OJB-C++ & BFCL(B) & Avg. \\
\midrule
Vanilla MOPD {\scriptsize (step 143)}                                        & 73.2          & 71.0          & 45.7          & 91.0          & 61.0          & 27.6          & 60.0          & 61.4 \\
\midrule
D$^3$-MOPD {\scriptsize (step 95)}                                           & \textbf{74.0} & \textbf{71.4} & 45.1          & 91.2          & 62.5          & \textbf{29.7} & \textbf{62.5} & \textbf{62.3} \\
\makecell[l]{D$^3$-MOPD w/ velocity floor \\ {\scriptsize ($\phi{=}0.2$, step 95)}}           & \textbf{74.0} & 71.3          & \textbf{45.9} & \textbf{91.5} & \textbf{62.7} & 29.3          & 61.0          & 62.2 \\
\bottomrule
\end{tabular}
\label{tab:vfloor}
\end{table}

\vspace{-10pt}
\paragraph{Discussion.} The velocity floor $\phi$ interpolates between two limit cases. At $\phi = 0$ the scheduler equals the paper formulation (Eq.~\ref{eq:descent_velocity}), which needs both a remaining gap and a positive descent velocity to invest in a domain. At $\phi = 1$ the composite reduces to the gap-only signal (the ``w/o velocity'' ablation in Table~\ref{tab:ablation}). Intermediate values such as $[0.1, 0.3]$ keep the product form and add a gap-proportional fallback whenever the velocity is clipped. This complements the rolling window count $R$: $R$ smooths noise within a single window, while $\phi$ handles cases where the velocity stays negative for longer than $RW$ consecutive steps.

\subsection{Skipping Rehearsal Domains}
\label{app:robustness_rehearsal}

\begin{figure}[t]
    \centering
    \includegraphics[width=\linewidth]{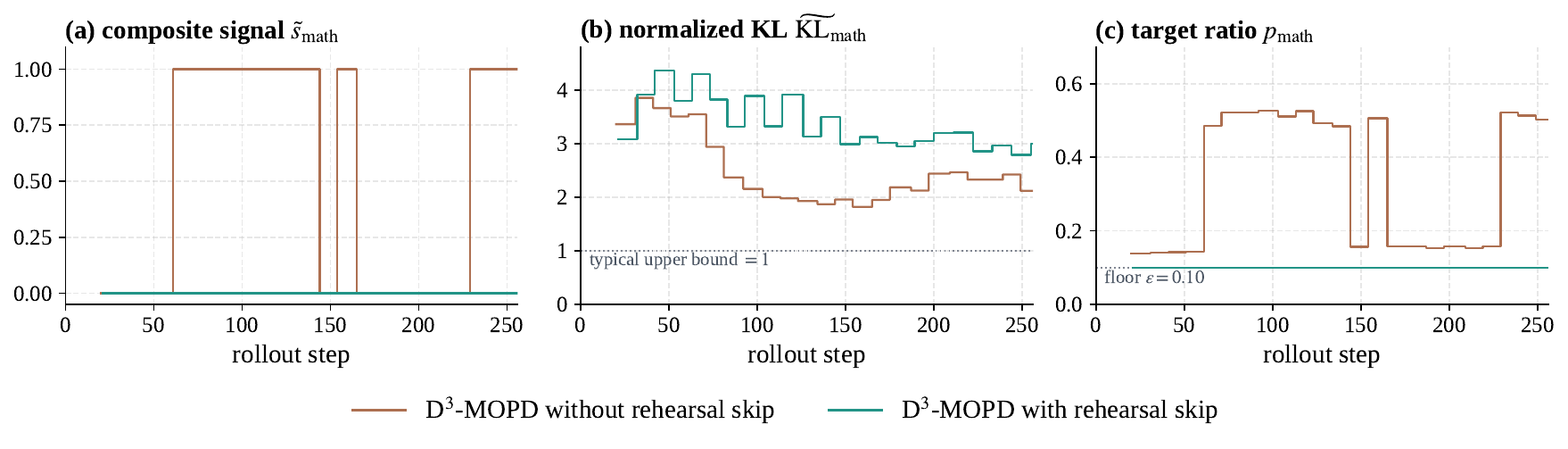}
    \vspace{-20pt}
    \caption{Math domain when Math uses the frozen Qwen3.6-35B-A3B student as its teacher, with rehearsal skip enabled (teal) and disabled (brown). $\tilde s_\text{math}=0$ means Math is pinned to the floor by rehearsal skip; $\tilde s_\text{math}=1$ means Math has the strongest signal.}
    \label{fig:rehmath_math}
\end{figure}

\paragraph{Failure mode.} In a model-merging scenario a teacher may be the frozen student itself, used to preserve the student's existing capability on that domain while the other domains are trained. Under this self-teacher setup $\mathrm{KL}_k^{(0)}$ is close to zero, and $\widetilde{\mathrm{KL}}_k = \overline{\mathrm{KL}}_k / \mathrm{KL}_k^{(0)}$ becomes very large for any small fluctuation in $\overline{\mathrm{KL}}_k$. The composite $s_k = \widetilde{\mathrm{KL}}_k \cdot v_k$ therefore inflates, and the watcher allocates a large budget to a domain that does not need learning, at the expense of the other domains.

\paragraph{Extension.} We add a rehearsal indicator $r_k \in \{0, 1\}$ that marks such a self-teacher domain. A domain is set to $r_k = 1$ either by an explicit configuration list or automatically when $\mathrm{KL}_k^{(0)} < \tau_{\text{abs}}$. The threshold $\tau_{\text{abs}}$ is set one or two orders of magnitude below the smallest expected non-rehearsal initial KL. Rehearsal domains are pinned to the floor and the remaining budget goes to non-rehearsal domains through the same softmax as before:
\begin{small}
\begin{equation}
    p_k(t) \;=\;
    \begin{cases}
        \epsilon, & r_k = 1, \\[2pt]
        \epsilon + \bigl(1 - K\epsilon\bigr)\dfrac{\exp(\tilde{s}_k(t)/T)}{\sum_{j:\,r_j=0}\exp(\tilde{s}_j(t)/T)}, & r_k = 0.
    \end{cases}
    \label{eq:rehearsal}
\end{equation}
\end{small}
Leaving the configuration list empty and $\tau_{\text{abs}} = 0$ disables both paths and reduces Eq.~\ref{eq:rehearsal} to Eq.~\ref{eq:ratio}.

\paragraph{Empirical validation.} We replace the Math teacher with the frozen Qwen3.6-35B-A3B student while keeping the Code, IF, and Tool teachers unchanged; other settings match the main experiment. Figure~\ref{fig:rehmath_math} compares Math under two variants: the baseline without rehearsal skip and the run with the skip enabled. In the baseline (a) the composite signal $\tilde s_\text{math}$ jumps between $0$ and $1$, and (c) $p_\text{math}$ climbs to $0.40$--$0.52$, well above the $1/K = 0.25$ share Math would deserve. Panel (b) explains the mechanism: $\widetilde{\mathrm{KL}}_\text{math}$ stays between $2$ and $4$, far above the typical upper bound of $1$, because Math's initial KL against the frozen-student teacher is near zero. With rehearsal skip enabled Math is pinned to $\epsilon = 0.10$ throughout, freeing budget for the other three domains.

\paragraph{Discussion.} The rehearsal indicator can be set in two ways. An explicit list of domain names covers the case where the user knows in advance which teachers are frozen copies of the student. An absolute-KL rule $\mathrm{KL}_k^{(0)} < \tau_{\text{abs}}$ covers the automatic case but requires $\tau_{\text{abs}}$ to be calibrated below the smallest expected non-rehearsal initial KL. Using both together catches mis-specifications from either side. A rehearsal domain's ratio is fixed at $\epsilon$ throughout training, which stops the initial-KL normalisation (Eq.~\ref{eq:normalized_kl}) from over-amplifying near-zero KL values.

\end{document}